\newcommand{\ignore}[1]{}
\newcolumntype{P}[1]{>{\centering\arraybackslash}p{#1}}
\newcolumntype{M}[1]{>{\centering\arraybackslash}m{#1}}
\def\shownotes{1}  %set 1 to show author notes
\newcommand{\authnote}[2]{{$\ll$\textsf{#1 notes: #2}$\gg$}}
\newcommand{\authnote}[2]{}
\newtheorem{theorem}{Theorem}
\newtheorem{lemma}[theorem]{Lemma}
\newtheorem{proposition}[theorem]{Proposition}
\newtheorem{remark}[theorem]{Remark}
\newtheorem{assumption}[theorem]{Assumption}
\let\hat\widehat
\let\tilde\widetilde
\let\leq\leqslant
\let\geq\geqslant
\newcommand{\ind}[1]{\mathds{1}_{#1}}
\newcommand{\E}{\mathbb{E}}
\let\Pr\relax
\DeclareMathOperator{\Pr}{\mathbb{P}}
\newcommand{\cond}{~|~}
\newcommand{\eqdef}{\mathbin{\stackrel{\rm def}{=}}}
\newcommand{\diff}{\mathrm{d}}
\newcommand{\reg}{\mathfrak{R}}
\newcommand{\cvar}{\mathrm{CVaR}}
\newcommand{\var}{\mathrm{VaR}}
\DeclareMathOperator{\argmax}{\mathrm{argmax}}
\def\uwave{\bgroup \markoverwith{\lower3.5\p@\hbox{\sixly \textcolor{blue}{\char58}}}\ULon}
\font\sixly=lasy6 % does not re-load if already loaded, so no memory problem.
\title{Near-Optimal MNL Bandits Under Risk Criteria}
\author {
    % Authors
        Guangyu Xi, \textsuperscript{\rm 1}
        Chao Tao, \textsuperscript{\rm 2}
        Yuan Zhou \textsuperscript{\rm 3} \\
}
\begin{document}

\maketitle

\begin{abstract}
We study MNL bandits, which is a variant of the traditional multi-armed bandit problem, under risk criteria. Unlike the ordinary expected revenue, risk criteria are more general goals widely used in industries and business. We design algorithms for a broad class of risk criteria, including but not limited to the well-known conditional value-at-risk, Sharpe ratio, and entropy risk, and prove that they suffer a near-optimal regret. As a complement, we also conduct experiments with both synthetic and real data to show the empirical performance of our proposed algorithms.
\end{abstract}

% \vspace{-0.5em}
\section{Introduction}
Dynamic assortment optimization is one of the fundamental problems in online learning. It has wide applications in industries, for example retailing and advertisement. To motivate the study of the problem, let us consider e-commerce companies like Amazon and Wish who want to sell products to online users when they visit the websites and search for some type of products, for example, headphones. Such companies usually have a variety of products with that type in a warehouse to sell. Due to the space constraints of a website, it is not possible to exhibit all of the available products. Hence, each time when an online user visits the website, only a limited number of products can be displayed. When an online user buys a product, the company will get some profit. So one natural goal for the company is to display on the website an assortment consisting of several products such that the expected revenue is maximized. However, in practice, a company may have more complex strategies other than simply maximizing its revenue, and general \textit{risk criteria} may be better choices to serve such goals. For example, in risk management, a very common risk criterion called \textit{expected shortfall} or \textit{conditional value at risk} (CVaR) is defined as the expected revenue under a certain percentile. If we only consider the expected revenue, we may lead to focus on recommending some products producing high revenue but purchased only by a small portion of users. If the company wishes to maintain a higher level of active and diversified users, then CVaR is more appropriate. Whether it is still possible for the sales manager of the company to design a near-optimal sales strategy when the goal is changed, for example to a kind of risk criteria, is a very practical problem and to the best of our knowledge, has not been studied before.

Suppose a company has $N$ products of a certain category to sell during a sales season, which can be represented by $[N]$, where $[N] \eqdef \left\{ 1,2,\dots,N \right\} $ and each product corresponds to an element in $[N]$. Let $T$ be the total number of times such products are searched during a sales season and $S_t$ be the assortment displayed by the website at $t$th time of the request. The aforementioned sales activity can be modeled by the following game which runs in $T$ time steps: at time step $1 \leq t \leq T$, when an assortment $S_t \subset [N]$ is displayed by the website, the online user will make a choice, i.e., whether to buy a product in $S_t$ or purchase nothing. Following the previous motivation example, we add a cardinality constraint, which means the number of products in $S_t$ can not exceed a predefined number $K \leq N$. Let $c_t$ denote the choice of the online user at time $t$. When $c_t = i$, it means that the online user buys product $i$. For convenience, we use $c_t = 0$ to represent the situation when the online user does not purchase anything. In general, $c_t$ can be viewed as a random variable and there is no doubt that the \textit{multinomial logit model} (MNL) \cite{agrawal2019mnl} has become the most popular one to model the behavior of the online user, i.e., $c_t$, when $S_t$ is provided. Dynamic assortment optimization with the MNL choice model is also called MNL bandits. In this model, each product $i$ is assumed to be related to an \textit{unknown} preference parameter $v_i$ and the probability that a visiting online user chooses product $i$ given assortment $S_t$ is defined by
\begin{equation} \label{model}
\Pr( c_t = i ) \eqdef \frac{ v_i }{1 + \sum_{ j \in S_t } v_j},
\end{equation}
where we set the preference parameter of no-purchase $v_0 = 1$. Note that this assumption does not harm the model too much since one can easily scale $v_i$'s to satisfy this condition. Following the literature, we also assume no-purchase is the most frequent choice i.e., $0 \leq v_i \leq 1$, which is often a reasonable assumption in sales activities.

During the last decade, MNL bandits has attracted much attention \cite{DBLP:journals/ior/RusmevichientongSS10, DBLP:journals/msom/SaureZ13, DBLP:conf/colt/AgrawalAGZ17, agrawal2019mnl, DBLP:journals/corr/abs-2007-04876}. However, all of the previous works consider maximizing the expected revenue, which is not always appropriate for practical applications. In this paper, we are interested in designing algorithms for a general class of risk criteria. 

% \vspace{-0.5em}
\section{Problem Formulation}
Suppose for each product $i \in [N]$, selling it successfully can make the company a profit of $r_i$, which is \textit{known} beforehand. Without loss of generality, we assume  $r_i \in (0, 1]$. This can always be achieved by proper scaling. Moreover, the profit for no-purchase is $r_0=0$. Then at time step $t\geq 1$, when assortment $S_t\subset [N]$ and preference parameter vector $ \bm{v} = (v_1, \dots, v_N) $ are provided, the profit can be represented by a random variable $X(S_t,\bm{v})$ defined by
% \vspace{-0.5em}
\begin{equation}
    \Pr (X(S_t,\bm{v})=r_i)=\Pr( c_t = i ) =\frac{ v_i }{1 + \sum_{ j \in S_t } v_j}
\end{equation}
for $i=0$ and all $i\in S_t$. In addition, we denote by $F(S_t,\bm{v})$ the cumulative distribution function to $X(S_t,\bm{v})$. Given time horizon $T$, one natural goal, as explained in the introduction, is to find a policy equipped by the decision maker such that the expected revenue, i.e., 
% \vspace{-0.5em}
\begin{equation}\label{ExpectedRev}
    \sum_{t=1}^T R (S_t ,\bm{v})= \sum_{t=1}^T \mathbb{E}\left[ X(S_t ,\bm{v})\right]
\end{equation}
is maximized, where $R(S_t,\bm{v})$ represents the expected profit when $S_t$ is served. This has been investigated previously in \cite{DBLP:conf/colt/AgrawalAGZ17, agrawal2019mnl}.

In this paper, instead of expectation, we consider a general class of risk criteria. Some examples of such risk criteria can be found in \cite{cassel2018general}. Suppose $\mathcal{D}$ is the convex set of cumulative distribution functions. In general, we consider the risk criterion $U$ which is a function from $\mathcal{D}$ to $\mathbb{R}$. In the case of expectation, $U(F)=\int x \diff F(x)$. In particular, since we assumed that $r_i\in(0,1]$, we will only need $F\in \mathcal{D}[0,1]$, where we denote by $\mathcal{D}[0,1]$ the subspace of $\mathcal{D}$ consisting of $F$ that is the cumulative distribution function of  random variable $X$ taking values on $[0,1]$. 
The goal of this paper is to find a policy such that the following quantity
\begin{equation} \label{eq:goal}
\E \left[ \sum_{t = 1}^T U(F(S_t,\bm{v})) \right ]
\end{equation}
is maximized. Let $S^*$ be the smallest assortment such that 
\begin{equation*}
U( F(S^*,\bm{v})) = \max_{S \subset [N], |S| \leq K} U(F(S,\bm{v})).
\end{equation*}
The regret of the game after $T$ time steps, which is a quantity measuring the difference between the optimal policy and policy $\pi$ used by the decision maker, is defined as
\begin{multline}  \label{eq:regret}
\reg_T^{\pi}([N], \bm{v}, \bm{r}) \\
\eqdef T U( F(S^*,\bm{v})) - \E\left[  \sum_{t = 1}^T U(F(S_t,\bm{v})) \right]. 
\end{multline}
where $\bm{r}=(r_1,\cdots,r_N)$ and  $\bm{v}=(v_1,\cdots,v_N)$.

When it is clear from the context, we usually omit the policy $\pi$ and parameters $([N], \bm{v}, \bm{r})$. Without much effort, we can see that maximizing \eqref{eq:goal} is equivalent to minimizing the regret \eqref{eq:regret}.

% \vspace{-0.5em}
\section{Related Work and Our Contribution}
To the best of our knowledge, we are the first to study MNL bandits under general risk criteria. 

In the past decade, there have been many works on the MNL bandit problem considering maximizing the expected revenue \eqref{ExpectedRev}. In \cite{DBLP:journals/ior/RusmevichientongSS10, DBLP:journals/msom/SaureZ13}, the authors assumed the gap between the best and the second-to-the-best assortments is known and proposed ``Explore-then-Commit'' algorithms. Later in \cite{agrawal2019mnl}, the authors proposed the state-of-the-art UCB-type algorithm with a regret upper bound $\mathcal{O}( \sqrt{NT \ln T} )$. Authors in \cite{DBLP:conf/colt/AgrawalAGZ17} utilized the Bayesian method i.e., Thompson Sampling to design an algorithm which performs well in practice. For the expected revenue, it is showed in \cite{chen2018note} that the lower bound for the regret is $\Omega( \sqrt{NT} )$. 

There are a lot of previous works studying different risk criteria in multi-armed bandits \cite{DBLP:conf/nips/SaniLM12, DBLP:conf/alt/Maillard13, DBLP:journals/corr/GalichetST14,  DBLP:journals/corr/ZiminIC14, DBLP:journals/jstsp/VakiliZ16}. In \cite{cassel2018general}, the authors established a thorough theory to deal with general risk criteria.

% \vspace{-0.5em}
\paragraph{Our Contribution} Note that directly applying the algorithm proposed in \cite{cassel2018general} will lead to a regret of $\Omega \left( \sqrt{ {N \choose K} T } \right)$ since each assortment corresponds to an arm, which is far from being optimal. We can not simply take each product as an arm since the optimal assortment may consist of multiple products due to that there is an involved relationship between the risk criterion of the assortment and the underlying preference parameters, which is characterized by the following two complex structures: a general (and non-specific) risk function and the multinomial logit choice model. In this paper, we are able to gain a clearer understanding of the challenge raised by these two complex structures. To be more specific, we recognize three mild conditions that are easy to verify and show that the class identified by the aforementioned conditions encompasses most of the risk criteria that are of interest in literature (see Table~\ref{table-of-reward-function}). We also design and analyze the algorithmic framework, proving that our algorithm achieves $\tilde{\mathcal{O}}(\sqrt{NT})$ regret for any general risk criterion that belongs to the class.

% \vspace{-0.5em}
\section{Assumptions}
In this section, we first present the aforementioned three assumptions the risk criterion $U$ should satisfy. 
\begin{assumption}[Quasiconvexity] \label{ass:quasi-convex}
$U$ is \textit{quasiconvex} on $\mathcal{D}[0, 1]$, i.e., for any $\lambda \in [0, 1]$ and $F_1,F_2\in \mathcal{D}[0, 1]$, it satisfies
\begin{equation} \label{eq:quasi-convexity}
U(\lambda F_1 + (1-\lambda) F_2 ) \leq \max\{ U(F_1), U(F_2) \}.
\end{equation}
\end{assumption}
In addition to quasiconvexity, we also make the following two assumptions on $U$.

\begin{assumption}[Boundedness] \label{ass:bounded}
For any $F \in \mathcal{D}[0, 1]$ it holds that $|U (F)| \leq \gamma_1 $.
\end{assumption}

\begin{assumption}[One-sided Lipschitz Condition] \label{ass:lipschitz}
For any $\bm{v}' \geq \bm{v}$, i.e., $v_i'\geq v_i$ for all $i\in[N]$, and $S \subset [N]$, it holds that
\begin{multline*}
U (F(S,\bm{v}'))  - U( F(S,\bm{v}))  \\ 
\leq \frac{ \gamma_2 }{1+\sum_{i \in S} v_i} \left[\sum_{i \in S} (v_i' - v_i) \right].
\end{multline*}
\end{assumption}

Note that here $\gamma_1$ and $\gamma_2$ are universal constants related to the risk criterion $U$.

\ignore{It seems restrictive to assume that the risk criteria satisfy quasiconvexity. However, we do not see any widely used risk criteria that is not quasiconvex in the literature.} We note that quasiconvexity is a natural assumption as most practical risk criteria considered in literature are quasiconvex. In Table~\ref{table-of-reward-function}, we give a list of the risk criteria considered, which are all quasiconvex as shown in \cite{cassel2018general}. To complement, we also show in Table~\ref{table-of-reward-function} that whether a risk criterion satisfies Assumption~\ref{ass:bounded} and Assumption~\ref{ass:lipschitz}, and give concrete values of $\gamma_1$ and $\gamma_2$. It turns out that all the risk criteria listed in Table~\ref{table-of-reward-function} satisfy all three assumptions except $\var$, which does not meet Assumption \ref{ass:lipschitz} since it is discontinuous in $\bm{v}$, 

\begin{table*}[t]
\centering
\begin{tabular}{| M{5cm} | M{3cm} | M{3cm}| M{3cm}| }
\toprule\addlinespace[0pt]
 Risk Criterion (Parameter) & Property &$\gamma_1$ & $\gamma_2$ \\ \hline
$ \var_{\alpha}$ & Quasiconvex & $1$  &  Not Exist \\ \hline
 $\cvar_{\alpha}$ &  Convex & $1$  &  $3/\alpha$ \\ \hline
  $n$th-moment &  Linear & $1$  &  $1$  \\ \hline
   Entropy risk ($\theta$) &  Convex & $1$  &  $2e^\theta /\theta$  \\ \hline
   Below target semi-variance ($r$) &  Linear & $r^2$  &  $2r^2$  \\ \hline
   Negative variance &  Convex & $\frac{1}{4}$  &  $6$  \\ \hline
   Mean-variance ($\rho$) &  Convex & $1+\frac{\rho}{4}$  &  $2+6\rho$  \\ \hline
   Sharpe ratio ($r,\epsilon$) &  Quasiconvex & $\frac{1}{\sqrt{\epsilon}}$  &  $2\epsilon^{-1/2}+3\epsilon^{-3/2}$  \\ \hline
   Sortino ratio ($r,\epsilon$) &  Quasiconvex & $\frac{1}{\sqrt{\epsilon}}$  &  $2\epsilon^{-1/2}+\epsilon^{-3/2}$  \\ \hline
\end{tabular}
\caption{Widely Used Risk Criteria}
\label{table-of-reward-function}
% \vspace{-0.5em}
\end{table*}

% \vspace{-0.5em}
\section{Algorithms}
Due to space constraints, we only show $\mathtt{RiskAwareUCB}$, which is a variant of the UCB-type algorithm proposed in \cite{agrawal2019mnl}, and its guarantee. In a similar way, we also propose $\mathtt{RiskAwareTS}$, a variant of the Thompson Sampling algorithm proposed in \cite{DBLP:conf/colt/AgrawalAGZ17}. Please refer to Appendix~B for its precise description and near-optimal guarantee.

The high level idea of the proposed algorithm $\mathtt{RiskAwareUCB}$ is as follows. We divide all the time steps i.e., $[T]$ into small episodes. During each episode $\ell$, the same assortment $S_{\ell}$ is repeatedly provided to the online user until a no-purchase outcome is observed. Specifically, in each episode $\ell$, we are providing the assortment
\begin{equation*}
     \argmax_{ S \subset [N], |S| \leq K } U (F(S,\tilde{\bm{v}}^{\ell})),
\end{equation*}
where $\tilde{\bm{v}}^{\ell}$ is an optimistic estimate of the real unknown preference parameters before the start of episode $\ell$. The details of $\mathtt{RiskAwareUCB}$ is described in Algorithm~\ref{alg:riskaware-ucb} where $t_{i, \ell}$ is the number of times the online users buy product $i$ in the $\ell$th episode and $\mathcal{T}_i(\ell)$ denotes the collection of episodes for which product $i$ is served until episode $\ell$ (exclusive).

\begin{algorithm}[t]
\DontPrintSemicolon
\caption{$\mathtt{RiskAwareUCB}(N, K, \bm{r}, U)$}
\label{alg:riskaware-ucb}
Initialize $t \leftarrow 1$, $\ell \leftarrow 1$, $\tilde{v}^{\ell}_i \leftarrow 1$ for $i \in [N]$ and $\mathcal{T}_i(\ell) \leftarrow \emptyset$ for $i \in [N]$ \\
\While {$t \leq T$}{
$S_{\ell} \leftarrow \argmax_{ S \subset [N], |S| \leq K } U (F(S,\tilde{\bm{v}}^{\ell}))$ \\
Initialize $t_{i, \ell} \leftarrow 0$ for $i \in [N]$ \\
\Repeat {$t > T$ or $c_{t-1} = 0$}{
Serve $S_{\ell}$ and observe customer choice $c_t$ \\
\lIf{$c_t \neq 0$}{$t_{c_t, \ell} \leftarrow t_{c_t, \ell} + 1$
}
$t \leftarrow t+1$
}
\For{$i \in [N]$}{
\lIf{$i \in S_{\ell}$}{$\mathcal{T}_i(\ell+1) \leftarrow \mathcal{T}_i(\ell) \cup \{ \ell \}$}
\lElse{$\mathcal{T}_i(\ell+1) \leftarrow \mathcal{T}_i(\ell)$}
}
$\ell \leftarrow \ell + 1$, $T_i(\ell) \leftarrow |\mathcal{T}_i(\ell)|$ for $i \in [N]$ \\
$\bar{v}_i^\ell \leftarrow \frac{ \sum_{\ell' \in \mathcal{T}_i(\ell)} t_{i, \ell' } }{ T_i(\ell) }$ for $i \in [N]$ \\
$\tilde{v}^{\ell}_i \leftarrow \min \Bigg \{ \bar{v}_i^\ell + \sqrt{\bar{v}_i^\ell \cdot \frac{48 \ln (\sqrt{N}\ell+1)}{T_i(\ell) }} + \frac{48 \ln (\sqrt{N}\ell+1)}{T_i(\ell) }, 1 \Bigg \}$ for $i \in [N]$
}
\end{algorithm}

\ignore{Let $t_{i, \ell}$ be the number of times the online users buy product $i$ in the $\ell$th episode and $\mathcal{T}_i(\ell)$ be the collection of episodes for which product $i$ is served until episode $\ell$ (exclusive). Define $T_i(\ell) \eqdef |\mathcal{T}_i(\ell)|$ and 
\begin{equation}
    \bar{v}_i^\ell \eqdef \frac{ \sum_{\ell' \in \mathcal{T}_i(\ell)} t_{i, \ell' } }{ T_i(\ell) } .
\end{equation}
The $i$th component of the optimistic preference parameters $\tilde{\bm{v}}^{\ell}$ is given by
\begin{multline*}
\tilde{v}^{\ell}_i = \min \Bigg \{ \bar{v}_i^\ell + \sqrt{\bar{v}_i^\ell \cdot \frac{48 \ln (\sqrt{N}\ell+1)}{T_i(\ell) }}  \\
  + \frac{48 \ln (\sqrt{N}\ell+1)}{T_i(\ell) }, 1 \Bigg \}.
\end{multline*}}

Then we have the following theoretical upper bound for $\mathtt{RiskAwareUCB}$.
\begin{theorem} \label{thm:main}
Suppose the risk criterion $U$ satisfies Assumption \ref{ass:quasi-convex}, \ref{ass:bounded} and \ref{ass:lipschitz}. The regret \eqref{eq:regret} incurred by the decision maker using $\mathtt{RiskAwareUCB}$ is upper bounded by $\tilde{\mathcal{O}} ( \sqrt{NT} )$ after $T$ time steps, where $\tilde{\mathcal{O}}$ hides poly-logarithmic factors in $N$ and $T$.
\end{theorem}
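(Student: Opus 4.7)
The plan is a UCB-style episodic analysis that mirrors \cite{agrawal2019mnl}, but replaces the monotonicity of expected revenue by a structural decomposition that exploits quasiconvexity. I would proceed in four stages: (i) show with high probability that $\tilde{\bm{v}}^{\ell}$ is an elementwise optimistic estimate of $\bm{v}$ with a tight confidence width; (ii) establish the risk-criterion optimism lemma $U(F(S^*,\bm{v})) \leq U(F(S_\ell,\tilde{\bm{v}}^{\ell}))$; (iii) bound the per-episode gap via the one-sided Lipschitz condition; and (iv) aggregate the confidence widths to an $\tilde{\mathcal O}(\sqrt{NT})$ bound. For (i), the number of times product $i$ is purchased in an episode with $i \in S_\ell$ is a geometric random variable with mean $v_i$ and is independent across episodes, so $\bar{v}_i^{\ell}$ is the empirical mean of $T_i(\ell)$ i.i.d.\ geometric samples; a Bernstein-type inequality (as in \cite{agrawal2019mnl}) delivers, with probability $1-\tilde{\mathcal O}(T^{-2})$ uniformly over $i$ and $\ell$, both $v_i \leq \tilde{v}_i^{\ell}$ and $\tilde{v}_i^{\ell} - v_i \leq \tilde{\mathcal O}\!\bigl(\sqrt{v_i/T_i(\ell)} + 1/T_i(\ell)\bigr)$.

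Step (ii) is the main obstacle, because quasiconvexity on its own does not imply any inequality of the form $U(F(S,\tilde{\bm{v}}^{\ell})) \geq U(F(S,\bm{v}))$; the specific MNL structure must be exploited. Setting $p_i = v_i/\tilde{v}_i^{\ell} \in [0,1]$ for $i \in S$ and
\[
\mu_{S'} \eqdef \frac{1 + \sum_{i \in S'} \tilde{v}_i^{\ell}}{1 + \sum_{i \in S} v_i}\prod_{i \in S'} p_i \prod_{i \in S \setminus S'} (1 - p_i),
\]
I would verify by direct computation of the mass assigned to $0$ and to each $r_i$ that
\[
F(S,\bm{v}) \;=\; \sum_{S' \subseteq S} \mu_{S'}\, F(S',\tilde{\bm{v}}^{\ell}),
\]
with $\mu_{S'} \geq 0$ and $\sum_{S'} \mu_{S'} = 1$. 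The normalization uses that $\prod_{i\in S'} p_i \prod_{i \in S \setminus S'} (1-p_i)$ is a product probability measure on subsets of $S$ and that $\sum_i p_i \tilde{v}_i^{\ell} = \sum_i v_i$. Applying Assumption~\ref{ass:quasi-convex} (valid for arbitrary finite convex combinations by induction) at $S = S^*$ and noting $|S'| \leq |S^*| \leq K$ for every $S' \subseteq S^*$ gives
\[
U(F(S^*,\bm{v})) \;\leq\; \max_{S' \subseteq S^*} U(F(S', \tilde{\bm{v}}^{\ell})) \;\leq\; U(F(S_\ell, \tilde{\bm{v}}^{\ell})),
\]
where the final inequality is the greedy choice of $S_\ell$ in the algorithm.

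With optimism established, Assumption~\ref{ass:lipschitz} bounds the per-episode gap on the good event by
\[
U(F(S^*,\bm{v})) - U(F(S_\ell,\bm{v})) \;\leq\; \frac{\gamma_2}{1 + \sum_{i \in S_\ell} v_i} \sum_{i \in S_\ell} \bigl(\tilde{v}_i^{\ell} - v_i\bigr).
\]
Writing the regret as $\E\!\left[\sum_\ell L_\ell\bigl(U(F(S^*,\bm{v})) - U(F(S_\ell,\bm{v}))\bigr)\right]$ with $L_\ell$ the length of episode $\ell$, and using $\E[L_\ell \mid S_\ell] = 1 + \sum_{i \in S_\ell} v_i$, the episode length cancels the denominator exactly, leaving $\gamma_2\,\E\!\bigl[\sum_\ell \sum_{i \in S_\ell} (\tilde{v}_i^{\ell} - v_i)\bigr]$. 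Assumption~\ref{ass:bounded} combined with a union bound over $\mathrm{poly}(T)$ episodes absorbs the complement of the good event into $\tilde{\mathcal O}(\gamma_1)$ additive regret. Finally, inserting the widths from step (i) and applying Cauchy--Schwarz together with the accounting identity $\sum_i v_i \cdot |\{\ell : i \in S_\ell\}| = \sum_\ell \sum_{i \in S_\ell} v_i \leq \sum_\ell \E[L_\ell \mid S_\ell] = T$ produces the desired $\tilde{\mathcal O}(\sqrt{NT})$ bound.
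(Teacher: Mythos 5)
Your proposal is correct, and its overall skeleton (optimistic confidence bounds from \cite{agrawal2019mnl}, episode-length cancellation of the factor $1+\sum_{i\in S_\ell}v_i$, then pigeonhole/Cauchy--Schwarz summation of the widths to get $N+2\sqrt{NT}$ plus lower-order $N\log T$ terms) matches the paper's proof of Theorem~\ref{thm:main}. Where you genuinely diverge is the optimism step. The paper proves Lemma~\ref{lem:monomax} in two stages: it first shows that $\bm{u}\mapsto U(F(S,\bm{u}))$ is quasiconvex for fixed $S$ via a two-point mixture identity, and then invokes Lemma~\ref{lem:optimistic}, a geometric fact about quasiconvex functions on $[0,1]^n$ under coordinate-zeroing, which is applicable because zeroing coordinates of $\bm{v}$ on $S^*$ produces sub-assortments of $S^*$ that are dominated by $S^*$ at the true parameters; this yields $U(F(S^*,\bm{v}'))\geq U(F(S^*,\bm{v}))$ and then the argmax at $\bm{v}'$ finishes. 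You instead exhibit the explicit mixture $F(S^*,\bm{v})=\sum_{S'\subseteq S^*}\mu_{S'}F(S',\tilde{\bm{v}}^{\ell})$ with $p_i=v_i/\tilde{v}^{\ell}_i$, which I checked is a valid convex combination (the choice probabilities and the no-purchase mass match, and the weights sum to one since $\sum_i p_i\tilde{v}^{\ell}_i=\sum_i v_i$), and then apply finite quasiconvexity plus the fact that every $S'\subseteq S^*$ is feasible under the cardinality constraint, so $U(F(S^*,\bm{v}))\leq\max_{S'\subseteq S^*}U(F(S',\tilde{\bm{v}}^{\ell}))\leq U(F(S_\ell,\tilde{\bm{v}}^{\ell}))$. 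Your route is more constructive and arguably cleaner: it bypasses the paper's Lemma~\ref{lem:optimistic} entirely and does not need the optimality (or minimality) of $S^*$ relative to its sub-assortments at the true parameters, only the argmax property of $S_\ell$ at the optimistic parameters; the paper's route buys a standalone monotonicity statement ($U(F(S^*,\cdot))$ nondecreasing on the upper orthant) that is reused verbatim for the Thompson-sampling analysis. Two minor points to tidy up, neither a gap: define $p_i$ when $\tilde{v}^{\ell}_i=0$ (on the good event this forces $v_i=0$, so set $p_i=1$, or note the algorithm's bonus keeps $\tilde{v}^{\ell}_i>0$), and with the paper's Lemma~\ref{lem:conf-bound} the failure probability is $1/\ell$ rather than $T^{-2}$, so the bad-event contribution is $\tilde{\mathcal{O}}(\gamma_1 N)$ additive rather than $\tilde{\mathcal{O}}(\gamma_1)$ --- still lower order, as in the paper.
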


Before proceeding, we first prove the following key lemma, which says that the risk gain of the optimal assortment calculated by an optimistic estimate of the preference parameters is never worse than that of $S^*$.

\begin{lemma}[Monotone Maximum] \label{lem:monomax}
For any $\bm{v}' \geq \bm{v}$, it holds that
$$
\max_{S \subset [N],\vert S\vert \leq K} U (F(S,\bm{v}')) \geq U (F(S^*,\bm{v})).
$$
\end{lemma}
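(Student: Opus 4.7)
The plan is to write $F(S^*,\bm{v})$ as a convex combination of the family $\{F(S,\bm{v}'):S\subseteq S^*\}$. Once that is done, Assumption~\ref{ass:quasi-convex} extends by an immediate induction to arbitrary finite mixtures, yielding
\[
U(F(S^*,\bm{v}))\;\leq\;\max_{S\subseteq S^*}U(F(S,\bm{v}'));
\]
since every such $S$ satisfies $|S|\leq|S^*|\leq K$, this maximum is bounded by the right-hand side of the lemma.

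To construct the mixture, I would use the following coupling intuition. Regard each $i\in S^*$ as independently ``surviving'' with probability $v_i/v_i'\in[0,1]$ and being dropped with probability $(v_i'-v_i)/v_i'$; one would like the survival probability of a subset $S\subseteq S^*$ to serve as the weight attached to $F(S,\bm{v}')$. The naive product weights fail because the distributions $F(S,\bm{v}')$ carry different normalizers $V'(S):=1+\sum_{i\in S}v_i'$. The correct ansatz absorbs this mismatch:
\[
\lambda_S \;=\; \frac{V'(S)}{1+\sum_{i\in S^*}v_i}\prod_{i\in S}\frac{v_i}{v_i'}\prod_{i\in S^*\setminus S}\frac{v_i'-v_i}{v_i'},
\]
which is nonnegative under the hypothesis $\bm{v}'\geq\bm{v}\geq 0$.

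Verification reduces to three bookkeeping identities---matching the probability mass on the no-purchase outcome, on each purchase outcome $r_j$ for $j\in S^*$, and the normalization $\sum_{S\subseteq S^*}\lambda_S=1$---each of which collapses once the factor $V'(S)$ cancels the denominator appearing in $F(S,\bm{v}')$, leaving Bernoulli marginal sums of the form $\sum_{S\ni j}\prod_{i\in S}p_i\prod_{i\in S^*\setminus S}(1-p_i)=p_j$ with $p_i:=v_i/v_i'$. The main obstacle is conceptual rather than computational: one must guess that the weights have to include the normalizing factor $V'(S)$ in order to correct the denominator mismatch between $F(S^*,\bm{v})$ and the various $F(S,\bm{v}')$---without it, the no-purchase identity already fails---after which the remaining verification is routine. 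An alternative route, which I would use as a sanity check, is to deform $\bm{v}$ into $\bm{v}'$ one coordinate at a time and at each step apply quasiconvexity to the two-term decomposition $F(S,\bm{v})=\alpha F(S,\bm{v}')+(1-\alpha)F(S\setminus\{j\},\bm{v}')$; this avoids guessing the full ansatz at the cost of iterating.
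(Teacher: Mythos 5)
Your proof is correct, and it takes a genuinely different route from the paper. You decompose $F(S^*,\bm{v})$ directly as an explicit convex combination $\sum_{S\subseteq S^*}\lambda_S\,F(S,\bm{v}')$ with the normalizer-corrected product weights you give (the marginal checks work out: the factor $V'(S)$ cancels the denominator of $F(S,\bm{v}')$, and the Bernoulli sums yield mass $v_j/(1+\sum_{i\in S^*}v_i)$ on each $r_j$ and $1/(1+\sum_{i\in S^*}v_i)$ on no-purchase, with total weight $1$), after which quasiconvexity of $U$ on $\mathcal{D}[0,1]$, extended to finite mixtures by induction, gives $U(F(S^*,\bm{v}))\leq\max_{S\subseteq S^*}U(F(S,\bm{v}'))$, and feasibility of every $S\subseteq S^*$ finishes the argument. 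The paper instead proceeds in parameter space: it first shows $\bm{u}\mapsto U(F(S,\bm{u}))$ is quasiconvex via a two-distribution mixture identity, then proves an auxiliary geometric lemma (its Lemma~\ref{lem:optimistic}) stating that a quasiconvex function which dominates all coordinate-zeroed variants of a point is nondecreasing on the upward orthant from that point, and applies it with $V(\bm{u})=U(F(S^*,\bm{u}))$, using the optimality (and minimality) of $S^*$ to verify the hypothesis. Your route is in some ways cleaner and more general: it never uses optimality of $S^*$, so it actually proves $U(F(A,\bm{v}))\leq\max_{S\subseteq A}U(F(S,\bm{v}'))$ for an arbitrary assortment $A$, and it avoids the delicate line-intersection bookkeeping in the paper's auxiliary lemma. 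What the paper's route buys in exchange is the stronger same-assortment monotonicity $U(F(S^*,\bm{v}'))\geq U(F(S^*,\bm{v}))$, which is reused later (in Proposition~\ref{pro:TSlipschitz} for the Thompson-sampling analysis) and which your subset-max bound does not by itself provide. Two trivial loose ends you should state: if $v'_i=0$ then $v_i=0$ and the ratio $v_i/v'_i$ should be fixed by convention (or such $i$ dropped), and the empty assortment appearing in your mixture is feasible since $|\emptyset|\leq K$.
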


\begin{proof}
Fix $S$, we first prove that $U (F(S,\bm{u}))$ is a quasiconvex function with respect to vector $\bm{u}$. This statement can be easily verified by noticing that for any $\lambda \in [0, 1]$ and $\bm{u}'$, we have
\begin{align*}
 & U (F(S, \lambda \bm{u} + (1-\lambda) \bm{u}') )  \\
= {} &  U \bigg( \frac{ \lambda(1+\sum_{i \in S} u_i )}{ \lambda(1+\sum_{i \in S} u_i) + (1 - \lambda)(1+\sum_{i \in S} u'_i) } F(S, \bm{u}) \\
& + \frac{ (1 - \lambda)(1+\sum_{i \in S} u'_i) }{ \lambda(1+\sum_{i \in S} u_i) + (1 - \lambda)(1+\sum_{i \in S} u'_i) } F(S, \bm{u}' ) \bigg) \\
\leq {} &  \max\{ U(F(S, \bm{u} )), U(F(S, \bm{u}' )) \}
\end{align*}
where the last inequality is due to quasiconvexity of $U$ on $\mathcal{D}[0, 1]$. 

Next, we show the following lemma.

\begin{lemma} \label{lem:optimistic}
Given a quasiconvex function $V(\bm{u})$ defined on $[0,1]^n$, suppose there is a point $\bm{\bar{u}}=(\bar{u}_1,\cdots,\bar{u}_n) \in [0,1]^n$ satisfying that $V(\bm{\bar{u}}) \geq V(\bm{u})$ for any point $\bm{u} \neq \bar{ \bm{u}} $ such that $u_i=\bar{u}_i \mbox{ or } 0$ for each $i$ from $1$ to $n$. Then we have that $V(\bm{u}') \geq V(\bm{\bar{u}})$ for any $\bm{u}' \geq \bm{\bar{u}}$.
\end{lemma}

\begin{proof}
For the sequence of points $\bm{u}^{(i)}=(u^{(i)}_1,\dots,u^{(i)}_n )$ with $i=1,2,\dots,n$ such that
\begin{equation*}
u^{(i)}_{j}=
\begin{cases}
      \bar{u}_j \qquad &j\neq i \\
      0 \qquad &j=i,
\end{cases}
\end{equation*}
we have that $V(\bm{u}^{(i)}) < V(\bar{\bm{u}})$. For any $\bm{u}' \geq \bm{\bar{u}}$ and $\bm{u}' \neq \bar{ \bm{u} }$, we define
$$\lambda_i = \left( \frac{u_i' -\bar{u}_i}{\bar{u}_i} \right)\left(\sum_{i=1}^{n} \frac{u_i' -\bar{u}_i}{\bar{u}_i}\right)^{-1}.
$$
Here $\lambda_i\in[0,1]$ for all $i=1,2,\cdots,n$ and $\sum_{i=1}^n \lambda_i=1$. Then the convex combination of $\bm{u}^{(i)}$
$$\tilde{\bm{u}} =\sum_{i=1}^{n} \lambda_i \bm{u}^{(i)}
$$
is on the same line as $\bm{u}'$ and $\bm{\bar{u}}$. By the quasiconvexity of $V$, we have that $V(\bm{\tilde{u}})\leq \max _{i=1}^{n} V(\bm{u}^{(i)}) < V(\bm{\bar{u}})$. If we define  $\lambda = \frac{1 }{1 + \sum_{i = 1}^n \lambda_i } $ , then
$$V(\bm{\bar{u}}) = V(\lambda \bm{\tilde{u}} +(1-\lambda)\bm{u}')\leq \max \{V(\bm{\tilde{u}}), V(\bm{u}') \},
$$
which means we must have $V(\bar{ \bm{u} }) \leq V( \bar{ \bm{u} }' )$.
\end{proof}

Let $S^{\dag}$ be the smallest assortment such that 
\begin{equation}
    U(F(S^\dag,\bm{v}'))=\max_{S\subset [N],\vert S\vert \leq K}U(F(S,\bm{v}')).
\end{equation}
Together with Lemma~\ref{lem:optimistic}, we obtain that $U (F(S^\dag,\bm{v}')) \geq U (F(S^*,\bm{v}')) \geq U (F(S^*,\bm{v}))$, which concludes the proof of this lemma.
\end{proof}

\begin{lemma} \label{lem:conf-bound}
Given any $\ell>0$ and $C_1,C_2>0$, we define event 
\begin{multline*}
\mathcal{E}_{\ell}=\Bigg \{ \forall i \in [N], v_i \leq \tilde{v}^{\ell}_i \leq v_i + C_1  \sqrt{\frac{ v_i \ln( \sqrt{N} \ell + 1 ) }{T_i(\ell) \vee 1} } \\ + C_2 \frac{ \ln( \sqrt{N} \ell + 1 ) }{T_i(\ell) \vee 1} \Bigg \}.
\end{multline*}
There exist real numbers $C_1, C_2 > 0$ such that
% \vspace{-0.5em}
\begin{equation*} \Pr( \mathcal{E}_{\ell} ) \geq 1 - \frac{1}{\ell}
% \vspace{-0.5em}
\end{equation*}
for any $\ell$.
\end{lemma}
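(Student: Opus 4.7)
My plan is to reduce the statement to a Bernstein-type concentration inequality for sums of i.i.d.\ geometric random variables, combined with a union bound over products and sample sizes. The first step is to identify the distribution of $t_{i,\ell'}$ for episodes $\ell'$ in which product $i$ is served. Within such an episode the trials are i.i.d.\ with $\Pr(c_t = i) = v_i/(1+\sum_{j\in S_{\ell'}} v_j)$ and $\Pr(c_t = 0) = 1/(1+\sum_{j \in S_{\ell'}} v_j)$, and the episode terminates at the first no-purchase. Conditioning on trials whose outcome lies in $\{0,i\}$ removes all dependence on the other items: among such trials, the success probability of ``no-purchase'' is exactly $1/(1+v_i)$. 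Hence $t_{i,\ell'}$ is geometric with $\Pr(t_{i,\ell'}=k) = v_i^k/(1+v_i)^{k+1}$, so $\E[t_{i,\ell'}] = v_i$ and $\Var(t_{i,\ell'}) = v_i(1+v_i) \leq 2 v_i$, and across different episodes these are mutually independent. Consequently $\bar v_i^{\ell}$ is the average of $T_i(\ell)$ i.i.d.\ geometric samples with mean $v_i$.

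Second, I would invoke a Bernstein inequality for such geometric sums (they are sub-exponential with parameters governed by $v_i$). For each fixed $i\in[N]$ and each deterministic $k\geq 1$, letting $\bar v_{i,k}$ denote the empirical mean of the first $k$ samples, there exist absolute constants $c_1,c_2$ such that
\begin{equation*}
\Pr\!\left(\abs{\bar v_{i,k} - v_i} \geq c_1 \sqrt{\tfrac{v_i \, s}{k}} + c_2 \tfrac{s}{k}\right) \leq 2 e^{-s}.
\end{equation*}
Setting $s = 3\ln(\sqrt N \ell + 1)$ and taking a union bound over $i\in[N]$ and $k\in\{1,\dots,\ell\}$ (since $T_i(\ell)\leq \ell$ deterministically) makes the overall failure probability at most $2N\ell\cdot(\sqrt N \ell + 1)^{-3} \leq 1/\ell$. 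After this union bound I may substitute the random $T_i(\ell)$ for $k$ and conclude that, on the resulting good event, $\abs{\bar v_i^{\ell} - v_i} \leq c_1\sqrt{v_i \ln(\sqrt N \ell+1)/(T_i(\ell)\vee 1)} + c_2 \ln(\sqrt N \ell+1)/(T_i(\ell)\vee 1)$.

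Third, I would convert this two-sided, true-variance bound into the one-sided, self-normalized form appearing in the definition of $\tilde v_i^{\ell}$. The upper bound $\tilde v_i^{\ell}\leq v_i + C_1\sqrt{v_i\ln(\sqrt N\ell+1)/(T_i(\ell)\vee 1)} + C_2 \ln(\sqrt N\ell+1)/(T_i(\ell)\vee 1)$ follows by plugging $\bar v_i^{\ell}\leq v_i + c_1\sqrt{v_i\ln/T_i(\ell)} + c_2\ln/T_i(\ell)$ into the defining formula and applying $\sqrt{a+b}\leq \sqrt a + \sqrt b$ to separate the $\sqrt{v_i}$ contribution from the lower-order terms. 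The lower bound $v_i \leq \tilde v_i^{\ell}$ is the standard UCB-V trick: from the Bernstein bound $v_i \leq \bar v_i^{\ell} + c_1\sqrt{v_i \ln/T_i(\ell)} + c_2\ln/T_i(\ell)$, solving the resulting quadratic in $\sqrt{v_i}$ yields $\sqrt{v_i}\leq \sqrt{\bar v_i^{\ell}} + c_3\sqrt{\ln/T_i(\ell)}$, which after squaring and a second application of AM-GM gives $v_i \leq \bar v_i^{\ell} + c_4\sqrt{\bar v_i^{\ell} \ln/T_i(\ell)} + c_5\ln/T_i(\ell)$; the constants $48$ in the definition of $\tilde v_i^{\ell}$ are chosen to dominate $c_4,c_5$.

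The main obstacle will be installing the Bernstein inequality with the right absolute constants for geometric (unbounded but sub-exponential) summands, so that the union-bound exponent is large enough to beat the $N\ell$ factor and still yield exactly $1/\ell$; a subsidiary difficulty is the algebraic conversion from the true-variance form ($\sqrt{v_i}$) to the empirical-variance form ($\sqrt{\bar v_i^{\ell}}$) while preserving the one-sided direction needed for optimism in Lemma~\ref{lem:monomax}. Both steps are standard in the UCB-V literature but must be carried out carefully to match the explicit constant $48$ that appears in Algorithm~\ref{alg:riskaware-ucb}.
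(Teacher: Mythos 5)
Your proposal is correct and follows essentially the same route as the paper, which simply defers this lemma to Lemma~4.1 of \citet{agrawal2019mnl}: the underlying argument there is exactly your chain of observations that the per-episode purchase counts $t_{i,\ell'}$ are geometric with mean $v_i$, followed by a Bernstein/sub-exponential concentration bound, a union bound over products and sample counts, and the standard conversion between the true-variance and empirical-variance confidence forms to get the one-sided optimism $v_i \leq \tilde{v}^{\ell}_i$. The only points to handle with care when writing it out in full (as the cited reference does) are the adaptivity of the episodes in which product $i$ is served, which requires an optional-skipping argument to treat the geometric samples as i.i.d., and the trivial case $T_i(\ell)=0$, where $\tilde{v}^{\ell}_i=1\geq v_i$ makes the event hold by the choice of $C_2$.
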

Lemma \ref{lem:conf-bound} can be easily derived from Lemma 4.1 of \cite{agrawal2019mnl}. So we omit its proof here.

\begin{proof}[Proof of Theorem~\ref{thm:main}]
Before proceeding, we introduce several notations. Let $L$ be the total number of episodes when $\mathtt{RiskAwareUCB}$ stops after $T$ steps. Denote $l_{\ell}$ by the length of the $\ell$th episode. Moreover, set $n_i=T_i(L)$, which is the total number of episodes product $i$ is served before the $L$th episode.

Using the law of total expectation, we rewrite the regret as
\begin{align*}
\reg_T & = \E\left[ \sum_{ \ell = 1}^L l_{\ell} \left( U (F(S^*,\bm{v})) - U (F(S_{\ell},\bm{v}))\right)  \right] \\
& = \E\left[ \sum_{\ell = 1}^L \E[ l_{\ell} ( U (F(S^*,\bm{v})) - U (F(S_{\ell},\bm{v})) ) \cond \mathcal{H}_{\ell} ]  \right],
\end{align*}
where $\mathcal{H}_{\ell}$ is the history before episode $\ell$. Since $S_{\ell}$ is determined by $\mathcal{H}_{\ell}$, there is
\begin{align*}
\reg_T  = \E\left[ \sum_{\ell = 1}^L \E[ l_{\ell}  \cond \mathcal{H}_{\ell} ]( U (F(S^*,\bm{v})) - U (F(S_{\ell},\bm{v})) )  \right].
\end{align*}
Given $S_{\ell}$, we know that $l_{\ell}$ follows a geometric distribution with parameter $1/(1+\sum_{i \in S_{\ell}} v_i )$. Hence we have $\E[ l_{\ell} \cond \mathcal{H}_{\ell} ] \leq 1 + \sum_{i \in S_{\ell} } v_i$. We put inequality here since the last episode may end due to time limit. Using aforementioned inequality, we further derive
\begin{align}
     \reg_T  \nonumber 
     & \leq \E \Bigg[ \sum_{\ell = 1}^L \E \Bigg[ \left(1 + \sum_{i \in S_{\ell} } v_i \right) \nonumber \\ 
    & \quad \times (  U (F(S^*,\bm{v})) - U (F(S_{\ell},\bm{v}))  ) \Bigg] \Bigg] \nonumber \\
    & = \E \left[ \sum_{\ell = 1}^L \E \delta_{\ell} \right] \label{main-thm:equ-1},
\end{align}
where we have defined $\delta_{ \ell } \eqdef  (1 + \sum_{i \in S_{\ell} } v_i) \times (  U (F(S^*,\bm{v})) - U (F(S_{\ell},\bm{v}))) $.

We now focus on bounding $\E \delta_{\ell}$. By a simple calculation, we get
\begin{align}
    \E \delta_{\ell} & = \E[ \delta_{\ell} \ind{ \mathcal{E}_{\ell}^c } ] + \E[ \delta_{\ell} \ind{\mathcal{E}_{\ell} } ] \nonumber \\
    & \leq 2 \gamma_1 (N + 1)  \Pr( \mathcal{E}_{\ell}^c ) + \E[ \delta_{\ell} \cond \mathcal{E}_{\ell} ] \Pr( \mathcal{E}_{\ell} ) \nonumber \\
    & \leq \frac{2\gamma_1(N + 1)}{ \ell } + \E[ \delta_{\ell} \cond \mathcal{E}_{\ell} ] \Pr( \mathcal{E}_{\ell} ), \label{main-thm:equ-2}
\end{align}
where in the second last inequality, we upper bound $\delta_{\ell}$ using $v_i \leq 1$ and Assumption~\ref{ass:bounded}, and the last inequality is due to Lemma~\ref{lem:conf-bound}. By Lemma~\ref{lem:monomax} and Assumption~\ref{ass:lipschitz}, we get 
\begin{align}
 & \E[ \delta_{\ell} \cond \mathcal{E}_{\ell} ] \nonumber\\
 & \leq \E \left[ (1 + \sum_{i \in S_{\ell} } v_i) ( U( F(S_{\ell},\tilde{\bm{v}}^{\ell} ) - U( F(S_{\ell},\bm{v} )) \cond \mathcal{E}_{\ell} \right] \nonumber \\
 & \leq \E \left[ \gamma_2 \sum_{i \in S_{\ell} } \left( C_1  \sqrt{\frac{ v_i \ln( \sqrt{N} \ell + 1 ) }{T_i(\ell) \vee 1} } \right. \right. \nonumber \\
& \quad \left. \left. + C_2 \frac{ \ln( \sqrt{N} \ell + 1 ) }{T_i(\ell) \vee 1}  \right) \right], \label{main-thm:equ-21}
\end{align}
where in the last equality we have used the definition of event $\mathcal{E}_{\ell}$. By \eqref{main-thm:equ-2} and \eqref{main-thm:equ-21}, we have
\begin{align} 
& \E \delta_{\ell} \leq \frac{2\gamma_1(N+1) }{ \ell } +  \E \left[ \gamma_2 \sum_{i \in S_{\ell} } \left( C_1  \sqrt{\frac{ v_i \ln( \sqrt{N} \ell + 1 ) }{T_i(\ell) \vee 1} } \right. \right. \nonumber \\
& \qquad \left. \left. + C_2 \frac{ \ln( \sqrt{N} \ell + 1 ) }{T_i(\ell) \vee  1} \right) \right]. \label{main-thm:equ-3}
\end{align}
Putting \eqref{main-thm:equ-3} back into \eqref{main-thm:equ-1}, we derive
\begin{align}
    & \reg_T \nonumber \\
    & \leq 2\gamma_1(N + 1) \underbrace{ \E \left[ \sum_{\ell = 1}^L \frac{1}{ \ell } \right] }_{(*)} \nonumber\\
    & \quad + \gamma_2 C_1  \sqrt{ \ln(\sqrt{N}T + 1) } \underbrace{ \E \left[  \sum_{\ell = 1}^L \sum_{i \in S_{\ell} } \sqrt{\frac{ v_i }{T_i(\ell) \vee 1} }  \right]}_{(**)} \nonumber \\
    & \quad +\gamma_2 C_2 \ln(\sqrt{N}T + 1) \underbrace{ \E \left[  \sum_{\ell = 1}^L \sum_{i \in S_{\ell} } \frac{1}{T_i(\ell) \vee 1}  \right]}_{(***)}. \label{main-thm:equ-4}
\end{align}
Note that $(*) \leq \sum_{\ell = 1}^{T} {\ell}^{-1} \leq \ln T + \gamma $, where $\gamma$ is Euler's constant. Next we bound $(**)$. Let $\mathcal{K}$ be the set $\{ (i, \ell) : T_i(\ell) = 0 \}$. It is easy to see $|\mathcal{K}| \leq N$. Together with observation $\sum_{i=1}^{j} \frac{1}{\sqrt{i}} \leq 2\sqrt{j}$ and Jensen's inequality, we derive
\begin{align*}
(**) &  \leq N + \E \left[ \sum_{i \in [N]} \left(\sqrt{v_i}  \sum_{j = 1}^{n_i} \frac{1}{\sqrt{j}} \right)\right] \\
& \leq N + 2 \E\left[ \sum_{i \in [N]} \sqrt{v_i n_i} \right] \leq N + 2 \sqrt{ N \E \left[ \sum_{i \in [N]} v_i n_i \right]},
\end{align*}
where $n_i$ is the total number of episodes product $i$ is served before the $L$th episode.
Noting that
\begin{align*}
    T & \geq \E \left[ \sum_{\ell = 1}^{L-1} \E[ l_{\ell} | \mathcal{H}_{\ell} ] \right] \\
      & = \E\left[ \sum_{\ell = 1}^{L-1}(1 + \sum_{i \in S_{\ell}} v_i) \right]
      \geq \E \left[ \sum_{i \in [N] } v_i n_i \right],
\end{align*} we obtain
$
(**) \leq N + 2 \sqrt{NT}.
$
Finally, we bound $(***)$ using
\begin{align*}
(***) & \leq N + \E\left[ \sum_{i \in [N]} ( \ln n_i + \gamma) \right] \\
& \leq N (1+\gamma) + N \ln \frac{\sum_{i \in [N]} n_i}{N} \\ 
& \leq N (1+\gamma) + N \ln T
\end{align*} Putting inequalities of $(*)$, $(**)$ and $(***)$ back into \eqref{main-thm:equ-4}, we obtain
\begin{align*}
\reg_T \leq{}  &  2 \gamma_1 (N +1) (\ln T + \gamma) \\
& + \gamma_2 C_1 \sqrt{\ln(\sqrt{N}T + 1)}( 2\sqrt{NT}+N) \\
& + \gamma_2 C_2 \ln(\sqrt{N}T+1)( N \ln T + N(1+\gamma)) \\
={} & \tilde{\mathcal{O}}( \sqrt{NT})
\end{align*}
and the proof is complete. 
\end{proof}

% \vspace{-0.5em}
\section{Examples of Risk Criteria}
In this section, we show that conditional value-at-risk, Sharpe Ratio, and entropy risk all satisfy Assumption~\ref{ass:bounded} and Assumption~\ref{ass:lipschitz}. For the proof of the other risk criteria listed in Table~\ref{table-of-reward-function}, we refer to Appendix A. 

For proving the one-sided Lipschitz condition, the following lemma is useful. The proof of Lemma \ref{LipschitzLemma} is in Appendix A. 
\begin{lemma}\label{LipschitzLemma}
For any $\bm{v}' \geq \bm{v}$, i.e., $v_i'\geq v_i$ for all $i\in[N]$, and $S \subset [N]$, it holds that
\begin{multline}
    \sum_{i\in S}\left\vert \frac{v'_i}{1+\sum_{i\in S}v'_i}- \frac{v_i}{1+\sum_{i\in S}v_i}\right\vert \leq \\  \frac{2}{1+\sum_{i\in S}v_i}\left[\sum_{i \in S} (v_i' - v_i) \right].
\end{multline}
\end{lemma}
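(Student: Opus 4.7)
The plan is to use the standard ``add and subtract'' manipulation on each term $\frac{v'_i}{1+\sum_{j\in S}v'_j} - \frac{v_i}{1+\sum_{j\in S}v_j}$ to split it into one piece that captures the numerator change and another that captures the denominator change, both of which have definite sign because $\bm{v}' \geq \bm{v}$.

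Concretely, I would let $A \eqdef 1 + \sum_{j \in S} v_j$ and $A' \eqdef 1 + \sum_{j \in S} v'_j$, so $A' \geq A$ and $A' - A = \sum_{j \in S}(v'_j - v_j)$. For each $i \in S$ I would write
\begin{equation*}
\frac{v'_i}{A'} - \frac{v_i}{A} \;=\; \frac{v'_i A - v_i A'}{A A'} \;=\; \frac{v'_i - v_i}{A'} - \frac{v_i (A' - A)}{A A'}.
\end{equation*}
Both subtracted quantities on the right are nonnegative, so the triangle inequality gives
\begin{equation*}
\left|\frac{v'_i}{A'} - \frac{v_i}{A}\right| \;\leq\; \frac{v'_i - v_i}{A'} + \frac{v_i (A' - A)}{A A'}.
\end{equation*}

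Next I would sum over $i \in S$. The first term telescopes to $(A' - A)/A'$, and the second sums to $\frac{(A'-A)\sum_{i \in S} v_i}{A A'} = \frac{(A'-A)(A-1)}{A A'}$. Adding and combining yields
\begin{equation*}
\sum_{i \in S}\left|\frac{v'_i}{A'} - \frac{v_i}{A}\right| \;\leq\; \frac{(A'-A)(2A - 1)}{A A'}.
\end{equation*}
Finally, the inequality $2A - 1 \leq 2A \leq 2A'$ immediately converts this bound into $\frac{2(A' - A)}{A} = \frac{2}{1+\sum_{i\in S} v_i}\sum_{i \in S}(v'_i - v_i)$, which is exactly the desired conclusion.

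The only mildly delicate step is the decomposition and the sign observation in the first paragraph; everything else is a direct telescoping sum together with the trivial bound $2A - 1 \leq 2A'$. There is no real obstacle beyond keeping the algebra tidy.
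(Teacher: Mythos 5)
Your proof is correct: the per-term identity $\frac{v'_i}{A'}-\frac{v_i}{A}=\frac{v'_i-v_i}{A'}-\frac{v_i(A'-A)}{AA'}$ is valid, both pieces are nonnegative under $\bm{v}'\geq\bm{v}$, and the telescoping sums $\sum_{i\in S}(v'_i-v_i)=A'-A$ and $\sum_{i\in S}v_i=A-1$ give exactly $\frac{(A'-A)(2A-1)}{AA'}$, which is indeed at most $\frac{2(A'-A)}{A}$. However, your route differs from the paper's. The paper does not expand each term; instead it exploits the fact that the choice probabilities (including the no-purchase probability $1/A$ resp.\ $1/A'$) sum to one, so the positive and negative deviations must balance: defining $S_0=\{i\in S:\frac{v'_i}{A'}\geq\frac{v_i}{A}\}$, the sum of absolute differences over $S\setminus S_0$ equals that over $S_0$ minus the no-purchase deviation, whence the full sum is at most twice the sum over $S_0$, and on $S_0$ each difference is bounded by $\frac{v'_i-v_i}{A'}$. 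Both arguments are elementary and of comparable length; the paper's balancing trick is slicker and generalizes to any probability-vector perturbation, while your decomposition is more mechanical but gives the sharper intermediate bound $\frac{(A'-A)(2A-1)}{AA'}$, and since $2A-1\leq 2A$ it also yields $\frac{2(A'-A)}{A'}$, i.e.\ the stronger form with $1+\sum_{i\in S}v'_i$ in the denominator that the paper records separately in Remark~\ref{remark:strongResult} and needs later for the Thompson Sampling analysis.
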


% \vspace{-0.5em}
\subsection{Conditional Value-at-risk}
Given $\alpha \in (0, 1]$, the conditional value-at-risk at $\alpha$ percentile for $F \in \mathcal{D}[0, 1]$ is defined as
$$
\cvar_{\alpha}(F) \eqdef \frac{1}{\alpha} \int_0^{
\alpha} \var_{\beta}(F) \diff \beta.
$$
An equivalent definition is
\begin{align*}
\cvar _\alpha(F) =\frac{1}{\alpha} \left(\alpha -\int_0^1 (F(x)\wedge \alpha) \diff x \right).
\end{align*}

\begin{proposition} \label{lem:cvar-ass2}
$\cvar_{\alpha}$ satisfies Assumption~\ref{ass:bounded} and Assumption~\ref{ass:lipschitz} with $\gamma_1 = 1$ and $\gamma_2 = 3 /\alpha$.
\end{proposition}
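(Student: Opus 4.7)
The plan is to lean on the alternative closed form
$\cvar_\alpha(F) = \frac{1}{\alpha}\bigl(\alpha - \int_0^1 (F(x)\wedge \alpha)\,\diff x\bigr)$
stated just before the proposition, which recasts both assumptions as estimates on the CDF itself rather than on quantiles. For Assumption~\ref{ass:bounded}, I would simply observe that for any $F\in\mathcal{D}[0,1]$ one has $F(x)\wedge\alpha \in [0,\alpha]$ for every $x\in[0,1]$; integrating over $[0,1]$ gives $\int_0^1 (F(x)\wedge\alpha)\,\diff x \in [0,\alpha]$, hence $\cvar_\alpha(F)\in[0,1]$ and $\gamma_1 = 1$.

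For Assumption~\ref{ass:lipschitz}, the key ingredient is the elementary contraction $|a\wedge\alpha - b\wedge\alpha|\leq|a-b|$. Applied pointwise to $a=F(S,\bm{v})(x)$ and $b=F(S,\bm{v}')(x)$, it gives
$\cvar_\alpha(F(S,\bm{v}')) - \cvar_\alpha(F(S,\bm{v})) \leq \tfrac{1}{\alpha}\int_0^1 |F(S,\bm{v})(x) - F(S,\bm{v}')(x)|\,\diff x$,
so the task reduces to an $L^1$ bound on the difference of two discrete CDFs on $[0,1]$. Writing $p_i = v_i/(1+\sum_{j\in S}v_j)$ and $p_i' = v_i'/(1+\sum_{j\in S}v_j')$ for $i\in\{0\}\cup S$ with $v_0=v_0'=1$, the definition of $F(S,\bm{v})$ as a step function jumping at each $r_i$, together with the triangle inequality, yields $|F(S,\bm{v})(x)-F(S,\bm{v}')(x)| \leq \sum_{i\in\{0\}\cup S:\,r_i\leq x}|p_i-p_i'|$; integrating on $[0,1]$ and using $1-r_i\leq 1$ bounds the $L^1$ distance by $|p_0-p_0'| + \sum_{i\in S}|p_i-p_i'|$.

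Finally, Lemma~\ref{LipschitzLemma} controls $\sum_{i\in S}|p_i-p_i'|$ by $\frac{2}{1+\sum_{j\in S}v_j}\sum_{i\in S}(v_i'-v_i)$, while a direct computation gives
$|p_0-p_0'| = \frac{\sum_{j\in S}(v_j'-v_j)}{(1+\sum_{j\in S}v_j)(1+\sum_{j\in S}v_j')} \leq \frac{1}{1+\sum_{j\in S}v_j}\sum_{j\in S}(v_j'-v_j)$,
supplying the extra $+1$. Combining these two pieces and dividing by $\alpha$ produces exactly $\gamma_2 = 3/\alpha$ as claimed. The only mildly tricky point—and really the only obstacle I foresee—is remembering that the no-purchase atom $i=0$ contributes to the CDF separately from the products in $S$, so Lemma~\ref{LipschitzLemma} alone is insufficient and the constant $2$ in that lemma becomes a $3$ once the $i=0$ contribution is included.
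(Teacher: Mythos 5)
Your proposal is correct and follows essentially the same route as the paper's proof: both rely on the integral representation of $\cvar_\alpha$, the contraction $|a\wedge\alpha-b\wedge\alpha|\leq|a-b|$, a triangle-inequality split of the CDF difference into the no-purchase atom plus the in-assortment atoms, and then Lemma~\ref{LipschitzLemma} (contributing the $2$) together with the direct bound on the no-purchase term (contributing the $1$) to get $3/\alpha$. The only cosmetic difference is that you integrate the pointwise bound with weights $1-r_i\leq 1$, whereas the paper bounds the CDF difference uniformly in $x$ before integrating; the constants are identical.
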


\begin{proof}
It is easy to see that $|\cvar_{\alpha}(F(S, \bm{v}))| \leq 1$, which implies $\gamma_1 = 1$. 

We now show the value of $\gamma_2$. Without loss of generality, we can assume that the profit of different products are different since for those items with the same revenue, we can combine them into one product and the corresponding abstraction parameter is the sum of those of the arms. Given assortment $S$, we denote the products in $S$ by $[\vert S \vert]$ in the increasing order of their profit. Then for any $k+1\in [\vert S \vert]$ and $x\in[r_k,r_{k+1})$
\begin{align*}
F(S, \bm{v}'; x) - F(S, \bm{v}; x) = \frac{1+\sum_{i=1}^k v_i' }{ 1+\sum_{i=1}^{|S|} v_i'}-\frac{1+\sum_{i=1}^k v_i}{1+\sum_{i=1}^{|S|} v_i }.
\end{align*}
Note that for any $x \in [r_k, r_{k+1})$, by Lemma \ref{LipschitzLemma}
\begin{align*}
&\vert F(S, \bm{v}'; x) - F(S,\bm{v};x)\vert \\
&\leq \left\vert \frac{1}{ 1+\sum_{i=1}^{|S|} v_i'}-\frac{1}{1+\sum_{i=1}^{|S|} v_i } \right\vert\\
&\quad + \sum_{i=1}^k \left\vert \frac{v_i'}{ 1+\sum_{i=1}^{|S|} v_i'}-\frac{v_i}{1+\sum_{i=1}^{|S|} v_i } \right\vert\\
&\leq \frac{\sum_{i=1}^{|S|}\vert v_i'-v_i\vert }{\left(1+\sum_{i=1}^{|S|} v_i' \right)\left(1+\sum_{i=1}^{|S|} v_i \right)}\\
&\quad + \frac{2}{1+\sum_{i=1}^{|S|}v_i}\left[\sum_{i \in S} (v_i' - v_i) \right]\\
&\leq \frac{3}{1+\sum_{i=1}^{|S|}v_i}\left[\sum_{i \in S} (v_i' - v_i) \right].
\end{align*}
Clearly the difference between $F(S, \bm{v}'; x) \wedge \alpha $ and $F(S, \bm{v}; x)\wedge \alpha $ satisfies the same bound. Hence
\begin{align*}
 &\left\vert\cvar_{\alpha}( F(S, \bm{v}') )-\cvar_{\alpha}( F(S, \bm{v}) ) \right\vert \\
 &\leq \frac{1}{\alpha}\left( \int_0^1 \vert F_X(x)\wedge \alpha -F_Y(x) \wedge \alpha \vert \diff x \right)\\
& \leq \frac{3/\alpha}{1+\sum_{i=1}^{|S|} v_i} \left[\sum_{i=1}^{|S|} (v_i' - v_i) \right].
\end{align*}
\end{proof}

\begin{figure*}[t]
\centering
\begin{minipage}[b]{.5\textwidth}
\centering
\includegraphics[width=.94\textwidth]{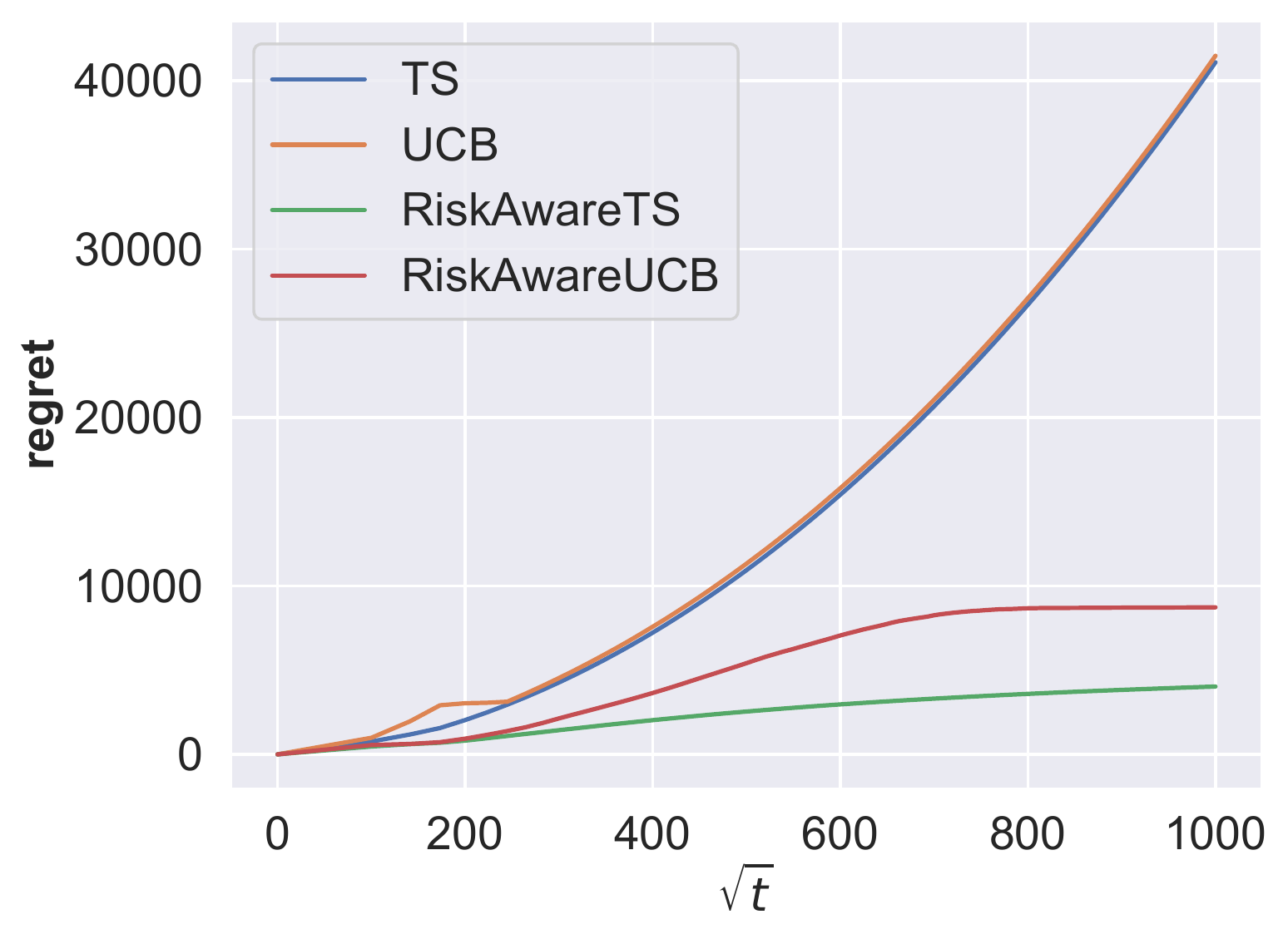}
\caption{Synthetic Data} \label{fig:worst-regret}
\end{minipage}%
\begin{minipage}[b]{.5\textwidth}
\centering
\includegraphics[width=.9\textwidth]{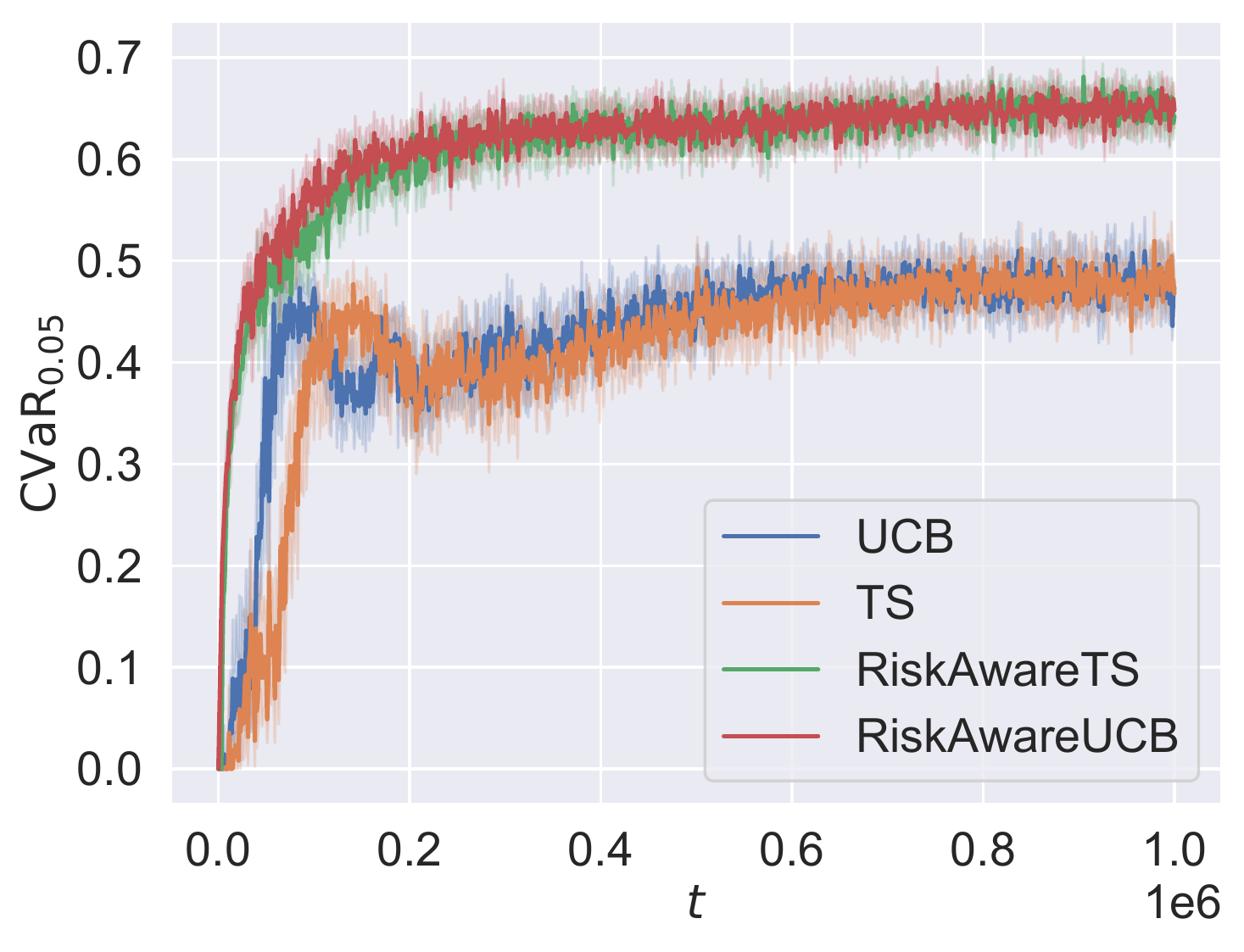}
\caption{Real Data}\label{fig:cvar}
\end{minipage}
% \vspace{-0.5em}
\end{figure*}

\subsection{Sharpe Ratio}
Given a minimum average reward $r\in[0,1]$ and the regularization factor $\epsilon$, for $F\in \mathcal{D}[0,1]$ we define 
 \begin{equation*}
     Sh_{r,\epsilon}(F)=\frac{U^1(F)-r}{\sqrt{\epsilon+\sigma^2(F)}},
 \end{equation*}
where $U^1(F)$ is the mean and $\sigma^2(F)$ is the variance. 
\begin{proposition} \label{lem:SR-ass2}
$Sh_{r,\epsilon}$ satisfies Assumption~\ref{ass:bounded} and Assumption~\ref{ass:lipschitz} with $\gamma_1 = \frac{1}{\sqrt{\epsilon}}$ and $\gamma_2 = 2\epsilon^{-1/2}+3\epsilon^{-3/2}$.
\end{proposition}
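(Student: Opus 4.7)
Boundedness is immediate: since $X(S,\bm{v})\in[0,1]$ we have $|U^{1}(F)-r|\le 1$ and $\sqrt{\epsilon+\sigma^{2}(F)}\ge \sqrt{\epsilon}$, so $|Sh_{r,\epsilon}(F)|\le 1/\sqrt{\epsilon}$, which gives $\gamma_{1}=\epsilon^{-1/2}$. This takes one line of the actual write-up.

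For the one-sided Lipschitz inequality, my plan is to exploit the algebraic identity
$$\frac{a'}{b'}-\frac{a}{b}=\frac{a'-a}{b'}+\frac{a(b-b')}{b'b}$$
with $a=U^{1}(F(S,\bm{v}))-r$, $b=\sqrt{\epsilon+\sigma^{2}(F(S,\bm{v}))}$, and primed analogues at $\bm{v}'$. This reduces the problem to two independent sub-bounds: one on $|U^{1}(F')-U^{1}(F)|$ and one on $|\sqrt{\epsilon+\sigma^{2}(F')}-\sqrt{\epsilon+\sigma^{2}(F)}|$, each with the $\frac{1}{1+\sum_{i\in S} v_{i}}[\sum_{i\in S}(v'_{i}-v_{i})]$ normalization.

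For the mean, I will write $U^{1}(F(S,\bm{v}))=\sum_{i\in S} r_{i}\,v_{i}/(1+\sum_{j\in S}v_{j})$, use $r_{i}\in(0,1]$, and invoke Lemma~\ref{LipschitzLemma} to get $|U^{1}(F')-U^{1}(F)|\le \frac{2}{1+\sum_{i\in S} v_{i}}[\sum_{i\in S}(v'_{i}-v_{i})]$. Dividing by $b'\ge\sqrt{\epsilon}$ contributes the $2\epsilon^{-1/2}$ term in $\gamma_{2}$. For the variance, I split $\sigma^{2}(F)=\sum_{i\in S} r_{i}^{2}\,p_{i}(\bm{v})-(U^{1}(F))^{2}$. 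The second-moment piece is again controlled by Lemma~\ref{LipschitzLemma} (constant $2$), and the $(U^{1})^{2}$ piece is handled by $|x^{2}-y^{2}|=|x+y||x-y|$ combined with $U^{1}\le 1$ and the previous mean bound (constant $4$), totaling $|\sigma^{2}(F')-\sigma^{2}(F)|\le \frac{6}{1+\sum_{i\in S}v_{i}}[\sum_{i\in S}(v'_{i}-v_{i})]$.

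Finally, using that $x\mapsto\sqrt{\epsilon+x}$ is $\tfrac{1}{2\sqrt{\epsilon}}$-Lipschitz, that $|a|\le 1$, and that $b'b\ge\epsilon$, the second term in the identity contributes at most $\frac{3\epsilon^{-3/2}}{1+\sum_{i\in S}v_{i}}[\sum_{i\in S}(v'_{i}-v_{i})]$. Summing the two pieces yields $\gamma_{2}=2\epsilon^{-1/2}+3\epsilon^{-3/2}$, as claimed. I do not anticipate a genuinely hard step here: Lemma~\ref{LipschitzLemma} already supplies exactly the right $(1+\sum v_{i})^{-1}$ normalization, so the bookkeeping — in particular isolating the factor-of-$3$ from the variance and factor-of-$2$ from the mean — is the only thing to be careful about.
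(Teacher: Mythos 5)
Your proposal is correct and follows essentially the same route as the paper: the same split of $\frac{a'}{b'}-\frac{a}{b}$ into a mean-difference term (bounded by $\frac{2}{\sqrt{\epsilon}}$ times the normalized increment via Lemma~\ref{LipschitzLemma}) and a denominator term controlled through $\vert\sigma^2(F')-\sigma^2(F)\vert\leq \frac{6}{1+\sum_{i\in S}v_i}\left[\sum_{i\in S}(v_i'-v_i)\right]$ with the $\sqrt{\epsilon+x}$ Lipschitz/denominator bound, yielding $2\epsilon^{-1/2}+3\epsilon^{-3/2}$. The paper simply cites its Appendix~A bounds for the mean and variance where you rederive them inline; there is no substantive difference.
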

\begin{proof}
Since $\sigma^2(F)>0$, $U^1(F)\in[0,1]$ and $r\in[0,1]$, it is easy to see that $\gamma_1=\frac{1}{\sqrt{\epsilon}}$. For the value of $\gamma_2$, by the Lipschitz property of the mean and variance (see Appendix A), we have 
\begin{align*}
    &\vert Sh_{r,\epsilon}(F(S,\bm{v}'))-Sh_{r,\epsilon}(F(S,\bm{v}))\vert\\
    &\leq \left\vert \frac{U^1(F(S,\bm{v}'))-U^1(F(S,\bm{v}))}{\sqrt{\epsilon+\sigma^2(F(S,\bm{v}'))}}\right\vert +\vert U^1(F(S,\bm{v}))-r\vert \\
    & \quad \times \left\vert \frac{1}{\sqrt{\epsilon+\sigma^2(F(S,\bm{v}'))}} -\frac{1}{\sqrt{\epsilon+\sigma^2(F(S,\bm{v}))}}\right\vert \\
    & \leq \frac{1}{\sqrt{\epsilon}}\vert U^1(F(S,\bm{v}'))-U^1(F(S,\bm{v}))\vert \\
    & \quad +\frac{\vert \sigma^2(F(S,\bm{v}'))-\sigma^2(F(S,\bm{v}))\vert }{\sqrt{\epsilon+\sigma^2(F(S,\bm{v}'))}\sqrt{\epsilon+\sigma^2(F(S,\bm{v}))} } \\
    & \quad \times \frac{1}{\sqrt{\epsilon+\sigma^2(F(S,\bm{v}'))}+\sqrt{\epsilon+\sigma^2(F(S,\bm{v}))}} \\
    &\leq \frac{1}{\sqrt{\epsilon}}\vert U^1(F(S,\bm{v}'))-U^1(F(S,\bm{v}))\vert \\
    & \quad +\frac{1}{2\epsilon^{\frac{3}{2}}}\vert \sigma^2(F(S,\bm{v}'))-\sigma^2(F(S,\bm{v}))\vert\\
    &\leq \frac{2\epsilon^{-1/2}+3\epsilon^{-3/2}}{1+\sum_{i\in S}v_i}\left[\sum_{i \in S} (v_i' - v_i) \right].
\end{align*}
\end{proof}

% \vspace{-0.5em}
\subsection{Entropy Risk}
Given the risk aversion parameter $\theta>0$, the entropy risk measure for $F \in \mathcal{D}[0, 1]$ is defined as 
\begin{equation*}
    U^{ent}(F)=-\frac{1}{\theta}\ln \left( \int_{0}^1 e^{-\theta x} \diff F(x) \right).
\end{equation*}

\begin{proposition} \label{lem:ent-ass2}
$U^{ent}$ satisfies Assumption~\ref{ass:bounded} and Assumption~\ref{ass:lipschitz} with $\gamma_1 = 1$ and $\gamma_2 = 2e^\theta /\theta$.
\end{proposition}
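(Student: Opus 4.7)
The plan is to reduce both assumptions to elementary estimates on
\[ g(\bm{v})\eqdef \int_0^1 e^{-\theta x}\diff F(S,\bm{v})(x) = \frac{1+\sum_{i\in S}v_i e^{-\theta r_i}}{1+\sum_{i\in S}v_i}, \]
so that $U^{ent}(F(S,\bm{v})) = -\tfrac{1}{\theta}\ln g(\bm{v})$. Boundedness is then immediate: since the weights $p_i(\bm{v})\eqdef v_i/(1+\sum_{j\in S}v_j)$ (with $v_0=1$) form a probability distribution on $S\cup\{0\}$ and $e^{-\theta r_i}\in[e^{-\theta},1]$ because $r_i\in[0,1]$, one has $g(\bm{v})\in[e^{-\theta},1]$ and therefore $U^{ent}(F(S,\bm{v}))\in[0,1]$, giving $\gamma_1=1$.

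For the one-sided Lipschitz condition I would proceed by a two-step reduction. First, using $|\ln a-\ln b|\leq |a-b|/\min(a,b)$ together with the uniform lower bound $g\geq e^{-\theta}$ established above,
\[ \bigl|U^{ent}(F(S,\bm{v}'))-U^{ent}(F(S,\bm{v}))\bigr|\leq \frac{e^{\theta}}{\theta}\,\bigl|g(\bm{v}')-g(\bm{v})\bigr|. \]
Second, it suffices to prove $|g(\bm{v}')-g(\bm{v})|\leq \frac{2(1-e^{-\theta})}{1+\sum_{i\in S}v_i}\sum_{i\in S}(v_i'-v_i)$, since chaining the two inequalities then yields the bound $\frac{2(e^{\theta}-1)}{\theta(1+\sum v_i)}\sum(v_i'-v_i)\leq \frac{2e^{\theta}/\theta}{1+\sum v_i}\sum(v_i'-v_i)$, and hence $\gamma_2=2e^{\theta}/\theta$.

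The only subtle point -- and the place where I expect a naive approach to produce a suboptimal constant -- is controlling $|g(\bm{v}')-g(\bm{v})|$. Writing $g$ as the weighted average $\sum_{i\in S\cup\{0\}}e^{-\theta r_i}p_i(\bm{v})$ and differencing term-by-term forces a separate bound on $|p_0(\bm{v}')-p_0(\bm{v})|$, costing an extra additive factor and yielding only $\gamma_2=3e^{\theta}/\theta$. I would avoid this by exploiting $\sum_{i} p_i(\bm{v})=\sum_{i}p_i(\bm{v}')=1$ to recenter the coefficients:
\[ g(\bm{v}')-g(\bm{v}) = \sum_{i\in S\cup\{0\}}\bigl(e^{-\theta r_i}-1\bigr)\bigl(p_i(\bm{v}')-p_i(\bm{v})\bigr), \]
where the $i=0$ contribution now drops out because $r_0=0$. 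Each remaining coefficient satisfies $|e^{-\theta r_i}-1|\leq 1-e^{-\theta}$ for $r_i\in(0,1]$, so applying Lemma \ref{LipschitzLemma} to $\sum_{i\in S}|p_i(\bm{v}')-p_i(\bm{v})|$ delivers the desired bound on $|g(\bm{v}')-g(\bm{v})|$ and completes the argument.
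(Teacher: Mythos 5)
Your proposal is correct and follows essentially the same route as the paper: both exploit that the exponential moment lies in $[e^{-\theta},1]$ so the logarithm is $e^{\theta}$-Lipschitz there, and both then invoke Lemma~\ref{LipschitzLemma} to pick up the factor $2$, giving $\gamma_2 = 2e^{\theta}/\theta$. Your recentering step $g(\bm{v}')-g(\bm{v})=\sum_{i}(e^{-\theta r_i}-1)(p_i(\bm{v}')-p_i(\bm{v}))$ is a nice refinement that handles the no-purchase mass rigorously (the paper's display simply writes the integral as a sum over $i\in S$ only) and in fact yields the marginally sharper constant $2(e^{\theta}-1)/\theta$ before relaxing to the stated value.
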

\begin{proof}
By Jensen's inequality, we always have $\vert U^{ent}(F)\vert\leq 1$. Given a fixed $\theta>0$, we know that 
\begin{equation*}
    \sum_{i\in S}e^{-\theta r_i} \frac{v_i}{1+\sum_{i\in S} v_i} \in [e^{-\theta},1].
\end{equation*}
By the convexity of the log function and Lemma \ref{LipschitzLemma}, we have that 
\begin{align*}
    &\vert U^{ent}(F(S,\bm{v}')) -U^{ent}(F(S,\bm{v})) \vert\\
    &= \frac{1}{\theta}\left\vert \ln \left(\sum_{i\in S} \frac{e^{-\theta r_i} v'_i}{1+\sum_{i\in S} v'_i}\right)-\ln\left(\sum_{i\in S} \frac{e^{-\theta r_i} v_i}{1+\sum_{i\in S} v_i}\right) \right\vert \\
    &\leq \frac{e^\theta}{\theta} \left\vert \sum_{i\in S} \frac{e^{-\theta r_i} v'_i}{1+\sum_{i\in S} v'_i}- \sum_{i\in S} \frac{e^{-\theta r_i} v_i}{1+\sum_{i\in S} v_i}\right\vert \\
    &\leq \frac{2e^\theta /\theta}{1+\sum_{i\in S}v_i}\left[\sum_{i \in S} (v_i' - v_i) \right].
\end{align*}
\end{proof}

% \vspace{-0.5em}
\section{Experimental Evaluation}
We evaluate $\mathtt{RiskAwareUCB}$ and $\mathtt{RiskAwareTS}$ against $\mathtt{UCB}$ and $\mathtt{TS}$ where the last two algorithms are set to maximize the expected revenue in both synthetic and real data. Based on the BanditPyLib library \cite{BanditPyLib}, all of the algorithms\footnote{Please refer to \url{https://github.com/Alanthink/aaai2021} for the source code.} are implemented in Python3.

% \vspace{-0.5em}
\paragraph{Synthetic Data} In this experiment, we fix the number of products $N = 10$, cardinality limit $K = 4$, horizon $T = 10^6$, and set the goal to be $U = \cvar_{0.5}$. We generate $10$ uniformly distributed random input instances where $v_i \in [0, 1]$ and $r_i \in [0.1, 1]$. For each input instance, we run $20$ repetitions and compute their average as the regret. Figure~\ref{fig:worst-regret} shows how the worst regret among all input instances changes with square root of time. 

% \vspace{-0.5em}
\paragraph{Real Data} In this experiment, we consider the ``UCI Car Evaluation Database" dataset from the Machine Learning Repository \cite{Dua:2019} which
contains $6$ categorical attributes for $N = 1728$ cars and consumer ratings for each car. We fix cardinality limit $K = 100$, horizon $T = 10^6$, and set the goal to be $U = \cvar_{0.05}$. 

By transforming each attribute to a one-hot vector, we obtain an attribute vector $m_i \in \{0, 1\}^{21}$ for each car. There are four different values for customer ratings i.e., ``acceptable", ``good", ``very good", and ``unacceptable". We decode ``unacceptable" by $0$ and others by $1$ to represent whether the customer has the intention to buy the car. We use logistic regression to predict whether the customer is likely to buy the car and the probability that the customer buys car $i$ is modeled by
$$
\frac{1}{1 + \exp( - \theta^{\mathrm{T}} m_i) },
$$ where $\theta \in \mathbb{R}^{21}$ is an \textit{unknown} parameter. After the model is fit with $L_2$ regularization, we set the preference parameter $v_i$ of car $i$ to be the same as the probability predicted by logistic regression. Since there is no profit data available for cars in this dataset, we generate uniformly distributed profit $r_i$ from $[0.1, 1]$ for each car. 

We run the experiment for $40$ repetitions and compute the average $\cvar_{0.05}$ for every consecutive $1000$ revealed profits. To save time, when computing the assortment with the best $\cvar_{0.05}$, we do a local search, i.e., try to replace a car, add a car or delete a car, and stops if we can not find a better assortment. Figure~\ref{fig:cvar} shows the results of the experiment.

\paragraph{Discussion} From Figure~\ref{fig:worst-regret}, we can see that both $\mathtt{RiskAwareUCB}$ and $\mathtt{RiskAwareTS}$ suffer a $\sqrt{t}$-rate regret. Moreover, $\mathtt{RiskAwareTS}$ performs better than $\mathtt{RiskAwareUCB}$, which aligns with literature that Thompson Sampling performs better in practice. From Figure~\ref{fig:cvar}, we can see that the obtained $\cvar_{0.05}$ under $\mathtt{UCB}$ and $\mathtt{TS}$ are far from optimal. However, $\mathtt{RiskAwareUCB}$ and $\mathtt{RiskAwareTS}$ perform roughly the same. For both of these experiments, we can see the proposed algorithms $\mathtt{RiskAwareUCB}$ and $\mathtt{RiskAwareTS}$ perform much better than $\mathtt{UCB}$ and $\mathtt{TS}$.

% \vspace{-0.5em}
\section{Conclusion}
In this work, we have shown the near-optimal algorithms for a general class of risk criteria, which only need to satisfy three mild assumptions. Experiments with both synthetic and real data are conducted to validate our results and show that ordinary algorithms suffer a worse performance when the goal is changed.

\section{Acknowledgments}
This work was supported in part by NSF IIS-1633215, CCF-1844234, CCF-2006591, and CCF-2006526.

% \newpage
\bibliography{references}

\newpage
\onecolumn

\section{Appendix A: Lipschitz Condition}
In this section, we prove that the $n$th-moments, below target semi-variance, negative variance, mean-variance and Sortino ratio all satisfy Assumption \ref{ass:bounded} and \ref{ass:lipschitz}. Firstly, we give the proof to Lemma \ref{LipschitzLemma} which is useful for verifying Assumption \ref{ass:lipschitz}.
\begin{proof}[Proof of Lemma~\ref{LipschitzLemma}]
Given $\bm{v}'\geq  \bm{v}$, define the subset 
\begin{equation*}
    S_0=\left\{i\in S: \frac{v'_i}{1+\sum_{i\in S}v'_i}\geq  \frac{v_i}{1+\sum_{i\in S}v_i}\right\}.
\end{equation*}
Then 
\begin{align*}
    \sum_{i\in S_0}\left\vert \frac{v'_i}{1+\sum_{i\in S}v'_i}- \frac{v_i}{1+\sum_{i\in S}v_i}\right\vert
    & = \sum_{i\in S/ S_0}\left\vert \frac{v'_i}{1+\sum_{i\in S}v'_i}- \frac{v_i}{1+\sum_{i\in S}v_i}\right\vert +\left\vert \frac{1}{1+\sum_{i\in S}v'_i}- \frac{1}{1+\sum_{i\in S}v_i}\right\vert.
\end{align*}
Therefore
\begin{align*}
    \sum_{i\in S}\left\vert \frac{v'_i}{1+\sum_{i\in S}v'_i}- \frac{v_i}{1+\sum_{i\in S}v_i}\right\vert & \leq 2\sum_{i\in S_0}\left\vert \frac{v'_i}{1+\sum_{i\in S}v'_i}- \frac{v_i}{1+\sum_{i\in S}v_i}\right\vert\\
    &\leq 2\sum_{i\in S_0}\left( \frac{v'_i}{1+\sum_{i\in S}v'_i}- \frac{v_i}{1+\sum_{i\in S}v'_i}\right)\\
     &\leq \frac{2}{1+\sum_{i\in S}v'_i}\left[\sum_{i \in S} (v_i' - v_i) \right]\\
    &\leq \frac{2}{1+\sum_{i\in S}v_i}\left[\sum_{i \in S} (v_i' - v_i) \right].
\end{align*}
\end{proof}
\begin{remark}\label{remark:strongResult}
For any $\bm{v}'\geq \bm{v}$, from the proof above, we can see that the following stronger result is true
\begin{equation}
    \sum_{i\in S}\left\vert \frac{v'_i}{1+\sum_{i\in S}v'_i}- \frac{v_i}{1+\sum_{i\in S}v_i}\right\vert \leq \frac{2}{1+\sum_{i\in S}v'_i}\left[\sum_{i \in S} (v_i' - v_i) \right].
\end{equation}

\end{remark}
\subsection{Value-at-risk}

Given $\alpha \in (0, 1]$, value-at-risk at $\alpha$ percentile for $F \in \mathcal{D}[0, 1]$ is defined as
$$
\var_{\alpha}(F) \eqdef \inf\{ x : F(x) \geq \alpha \}.
$$
It is easy to see that $|\var_{\alpha}(F(S, \bm{v}))| \leq 1$ and hence $\gamma_1 = 1$. However, $\var_{\alpha}(F(S, \bm{v}))$ is not continuous on $\bm{v}$, and $\gamma_2$ does not exist.

\subsection{$n$th-moment}
For any $n\in \mathbb{N}$, we can define the $n$th-moment about zero for $F \in \mathcal{D}[0, 1]$
\begin{equation*}
    U^n(F)=\int_{0}^1 x^n \diff F(x).
\end{equation*}
\begin{proposition} \label{lem:nMoment-ass2}
The $n$th-moment satisfies Assumption~\ref{ass:bounded} and Assumption~\ref{ass:lipschitz} with $\gamma_1 = 1$ and $\gamma_2 = 1$.
\end{proposition}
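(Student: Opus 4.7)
For the boundedness, the plan is immediate: since $X(S,\bm{v})$ takes values in $[0,1]$, we have $x^n \in [0,1]$, and therefore $U^n(F(S,\bm{v})) = \int_0^1 x^n \diff F(x) \in [0,1]$, which gives $\gamma_1 = 1$.

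For the one-sided Lipschitz condition, the key observation is that the MNL distribution places point mass on finitely many points, so
\[
U^n(F(S,\bm{v})) = \sum_{i \in S} r_i^n \cdot \frac{v_i}{1 + \sum_{j \in S} v_j}.
\]
Writing $W = 1 + \sum_{j \in S} v_j$ and $W' = 1 + \sum_{j \in S} v_j'$, I would expand each difference $\frac{v_i'}{W'} - \frac{v_i}{W}$ via the identity $v_i' W - v_i W' = (v_i' - v_i)(1+\sum_j v_j) - v_i\sum_j(v_j' - v_j)$, so that
\[
\frac{v_i'}{W'} - \frac{v_i}{W} = \frac{v_i' - v_i}{W'} - \frac{v_i \sum_{j \in S}(v_j' - v_j)}{W W'}.
\]
Summing against $r_i^n$ gives two contributions. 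Because $\bm{v}' \geq \bm{v}$, the second contribution $-\frac{\sum_j(v_j' - v_j)}{W W'}\sum_{i \in S} r_i^n v_i$ is non-positive and can simply be dropped. The first contribution is bounded using $r_i^n \leq 1$ and $v_i' - v_i \geq 0$:
\[
\sum_{i \in S} r_i^n \frac{v_i' - v_i}{W'} \leq \frac{\sum_{i \in S}(v_i' - v_i)}{W'} \leq \frac{\sum_{i \in S}(v_i' - v_i)}{W},
\]
where in the last step I use $W' \geq W$. This yields the one-sided bound with $\gamma_2 = 1$.

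The main subtlety is that a naive application of Lemma~\ref{LipschitzLemma} would only give $\gamma_2 = 2$ (because that lemma is two-sided via absolute values). The one-sided structure of Assumption~\ref{ass:lipschitz} is what allows the improvement: because $\bm{v}' \geq \bm{v}$ and $r_i^n \geq 0$, the ``correction'' term from the changing denominator has a definite sign and works in our favor, and can simply be discarded rather than being absorbed into a $2\times$ constant. This is the only nontrivial step; everything else is a routine algebraic manipulation.
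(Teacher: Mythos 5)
Your proof is correct and takes essentially the same route as the paper: both compute $U^n(F(S,\bm{v}))$ as the finite sum $\sum_{i\in S}r_i^n\,v_i/(1+\sum_{j\in S}v_j)$ and exploit the one-sidedness $\bm{v}'\geq\bm{v}$ to get $\gamma_2=1$ without invoking Lemma~\ref{LipschitzLemma} (the paper simply uses $v_i'/(1+\sum_{j\in S}v_j')\leq v_i'/(1+\sum_{j\in S}v_j)$, whereas you expand the difference and drop a nonpositive correction term --- the same idea in a slightly different algebraic form). A minor bonus of your variant is that before relaxing $W'$ to $W$ it keeps $1+\sum_{i\in S}v_i'$ in the denominator, which is exactly the stronger form recorded in Remark~\ref{remark:strongResult} and required by Assumption~\ref{ass:lipschitz2} for the Thompson Sampling analysis.
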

\begin{proof}
It is immediate to check that $\vert U^n(F)\vert\leq 1$. For the value of $\gamma_2$, notice that 
\begin{align*}
     U^n (F(S,\bm{v}'))  - U^n( F(S,\bm{v})) 
     & = \sum_{i\in S} r_i^n\left[ \frac{v'_i}{1+\sum_{i\in S} v'_i}-\frac{v_i}{1+\sum_{i\in S} v_i} \right]\\
    & \leq \sum_{i\in S} r_i^n\left[ \frac{v'_i}{1+\sum_{i\in S} v_i}-\frac{v_i}{1+\sum_{i\in S} v_i} \right]\\
    &\leq \frac{ 1 }{1+\sum_{i \in S} v_i} \left[\sum_{i \in S} (v_i' - v_i) \right].
\end{align*}
Moreover, using Lemma \ref{LipschitzLemma}, we have that 
\begin{equation} \label{eq:nmomentLipschitz}
    \vert U^n (F(S,\bm{v}'))  - U^n( F(S,\bm{v}))\vert  \leq \frac{ 2 }{1+\sum_{i \in S} v_i} \left[\sum_{i \in S} (v_i' - v_i) \right].
\end{equation}
\end{proof}
\begin{remark}
It is easy to see that the same argument applies to a large class of risk criteria of the form 
\begin{equation*}
    U(F)=\int_{-\infty}^\infty h(x) \diff F(x).
\end{equation*}
\end{remark}

\subsection{Below target semi-variance}
Given a target $r\in [0,1]$, we can define the negative below target semi-variance for any $F \in \mathcal{D}[0, 1]$
\begin{equation*}
    -TSV_r(F)= - \int_0^1 (x-r)^2 \ind{\{ x\leq r\}} \diff F(x).
\end{equation*}
\begin{proposition} \label{lem:BTV-ass2}
The negative below target semi-variance satisfies Assumption~\ref{ass:bounded} and Assumption~\ref{ass:lipschitz} with $\gamma_1 = r^2$ and $\gamma_2 = 2r^2$.
\end{proposition}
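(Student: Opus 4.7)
My plan is to reduce Proposition~\ref{lem:BTV-ass2} to the already-established $n$th-moment argument (Proposition~\ref{lem:nMoment-ass2}) together with the Remark following it, which observes that the same technique applies to any risk criterion of the form $U(F)=\int h(x)\,\diff F(x)$ with bounded integrand $h$.

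\emph{Boundedness.} This is immediate: on $[0,1]$ the integrand $(x-r)^2 \ind{x\leq r}$ lies in $[0,r^2]$, so $|{-TSV_r}(F)| \leq r^2$, giving $\gamma_1 = r^2$ with essentially no work.

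\emph{One-sided Lipschitz.} Here the natural integrand is $h(x) = -(x-r)^2\ind{x\leq r}$, which satisfies $|h(x)| \leq r^2$ on $[0,1]$. The one twist relative to the $n$th-moment proof is that $h(0) = -r^2 \neq 0$, so the no-purchase atom in $F(S,\bm{v})$ contributes nontrivially and one cannot apply Lemma~\ref{LipschitzLemma} directly to $\sum_{i\in S}$. My remedy is a constant shift: replace $h$ by $\tilde h(x) = h(x) + r^2 = r^2 - (x-r)^2 \ind{x\leq r}$, which satisfies $\tilde h(0) = 0$ and $0 \leq \tilde h(x) \leq r^2$ on $[0,1]$. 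Because $\int \tilde h\,\diff F = \int h\,\diff F + r^2$ is a uniform shift, the difference $U(F(S,\bm{v}')) - U(F(S,\bm{v}))$ is unchanged by this replacement.

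With the shifted integrand in hand and $\bm{v}' \geq \bm{v}$, I would expand the two distributions as weighted atoms at $\{r_i : i \in S \cup \{0\}\}$, use $\tilde h(0) = 0$ to kill the $i = 0$ term, bound each $|\tilde h(r_i)|$ by $r^2$, apply the triangle inequality, and invoke Lemma~\ref{LipschitzLemma} to bound $\sum_{i \in S} |v_i'/(1+\sum v_j')-v_i/(1+\sum v_j)|$ by $\frac{2}{1+\sum_{i\in S} v_i}\sum_{i\in S}(v_i'-v_i)$. This chain of estimates yields the required upper bound $\frac{2r^2}{1+\sum_{i \in S} v_i} \sum_{i \in S}(v_i' - v_i)$, hence $\gamma_2 = 2r^2$. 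The only conceptual obstacle is noticing that shifting $h$ by a constant simultaneously preserves the Lipschitz difference and reduces the problem cleanly to the $n$th-moment framework; once that observation is made, the remaining computation is entirely routine.
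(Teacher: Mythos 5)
Your proof is correct, and at its core it follows the same route as the paper: bound the coefficient of each atom by $r^2$, apply the triangle inequality, and invoke Lemma~\ref{LipschitzLemma} to get $\gamma_2 = 2r^2$. The genuine difference is your treatment of the no-purchase atom. The paper's displayed identity writes the difference as $\sum_{i\in S,\, r_i\leq r}(r_i-r)^2\bigl(-\frac{v_i'}{1+\sum_{j\in S}v_j'}+\frac{v_i}{1+\sum_{j\in S}v_j}\bigr)$ and thereby silently drops the $i=0$ term, which is nonzero since $h(0)=-(0-r)^2=-r^2$ and the no-purchase probability changes with $\bm{v}$; as written, that equality is incomplete. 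Your constant shift $\tilde h(x)=r^2-(x-r)^2\ind{\{x\leq r\}}$ fixes exactly this: it leaves the difference $U(F(S,\bm{v}'))-U(F(S,\bm{v}))$ unchanged, annihilates the $x=0$ atom, and keeps $0\leq \tilde h\leq r^2$, so Lemma~\ref{LipschitzLemma} still yields the constant $2r^2$ (indeed as a two-sided bound). Note that the more naive repair — keeping the omitted $i=0$ term and bounding it separately by $r^2\bigl|\frac{1}{1+\sum_{j\in S}v_j'}-\frac{1}{1+\sum_{j\in S}v_j}\bigr|$ — would only give $3r^2$, analogous to the $3/\alpha$ in the CVaR proof, so your shift is what preserves the stated constant while making the argument airtight.
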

\begin{proof}
By the definition of $-TSV_R(F)$, it is easy to see that it is bounded by $r^2$. Then by Lemma \ref{LipschitzLemma}, there is 
\begin{align*}
    \vert -TSV_r(F(S,\bm{v}')) +TSV_r(F(S,\bm{v})) \vert 
    & = \left\vert \sum_{i\in S,r_i\leq r} (r_i -r)^2\left(-\frac{v'_i}{1+\sum_{i\in S}v'_i} +\frac{v_i}{1+\sum_{i\in S}v_i} \right)\right\vert\\
    &\leq  r^2 \sum_{i\in S}  \left\vert \frac{v'_i}{1+\sum_{i\in S}v'_i}- \frac{v_i}{1+\sum_{i\in S}v_i}\right\vert\\ &\leq \frac{2 r^2}{1+\sum_{i\in S}v_i}\left[\sum_{i \in S} (v_i' - v_i) \right].
\end{align*}
\end{proof}

\subsection{Negative variance}
For any $F\in \mathcal{D}[0,1]$, the negative variance is 
\begin{equation*}
    -\sigma^2(F)=-[U^2(F)-(U^1(F))^2].
\end{equation*}
\begin{proposition} \label{lem:Var-ass2}
The negative variance satisfies Assumption~\ref{ass:bounded} and Assumption~\ref{ass:lipschitz} with $\gamma_1 = \frac{1}{4}$ and $\gamma_2 = 6$.
\end{proposition}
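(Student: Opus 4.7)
\textbf{Proof plan for Proposition \ref{lem:Var-ass2}.} The boundedness claim is the standard observation that the variance of a random variable supported on $[0,1]$ is at most $1/4$, attained by a Bernoulli$(1/2)$. Since $X(S,\bm{v})$ takes values in $[0,1]$ by our scaling convention on the revenues $r_i$, we immediately get $|-\sigma^2(F(S,\bm{v}))|\leq 1/4$, so $\gamma_1=1/4$.

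For the one-sided Lipschitz condition, the plan is to decompose the variance into its two moment components and apply the $n$th-moment bound (Proposition \ref{lem:nMoment-ass2}, specifically equation \eqref{eq:nmomentLipschitz}) separately to each. Write
\begin{equation*}
-\sigma^2(F(S,\bm{v}'))-\bigl(-\sigma^2(F(S,\bm{v}))\bigr)
= \bigl(U^2(F(S,\bm{v}))-U^2(F(S,\bm{v}'))\bigr) + \bigl((U^1(F(S,\bm{v}')))^2-(U^1(F(S,\bm{v})))^2\bigr),
\end{equation*}
and bound each term in absolute value by the triangle inequality.

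For the first term, Proposition \ref{lem:nMoment-ass2} with $n=2$ gives absolute bound $\frac{2}{1+\sum_{i\in S}v_i}\sum_{i\in S}(v_i'-v_i)$. For the second term, factor $|a^2-b^2|=|a+b|\cdot|a-b|$ with $a=U^1(F(S,\bm{v}'))$ and $b=U^1(F(S,\bm{v}))$; since both lie in $[0,1]$ we have $|a+b|\leq 2$, and Proposition \ref{lem:nMoment-ass2} with $n=1$ then yields absolute bound $2\cdot\frac{2}{1+\sum_{i\in S}v_i}\sum_{i\in S}(v_i'-v_i) = \frac{4}{1+\sum_{i\in S}v_i}\sum_{i\in S}(v_i'-v_i)$. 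Summing the two bounds gives $\gamma_2 = 2+4 = 6$, which concludes the proof.

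There is no real obstacle here: the argument is a routine reduction to the already-established $n$th-moment Lipschitz bound, and the only quantitative input beyond that is the elementary inequality $|a+b|\leq 2$ for $a,b\in[0,1]$. The cleanness of the constant $6 = 2 + 2\cdot 2$ is a direct consequence of this decomposition.
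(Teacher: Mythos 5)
Your proposal is correct and follows essentially the same route as the paper: bound $\gamma_1$ by the standard $1/4$ variance bound on $[0,1]$, then split the variance difference into the $U^2$ term (bounded via \eqref{eq:nmomentLipschitz}, giving $2$) and the $(U^1)^2$ term (factored as $|a+b|\,|a-b|\leq 2$ times the first-moment bound, giving $4$), summing to $\gamma_2=6$. No gaps.
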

\begin{proof}
It is well known that $\vert \sigma^2(F)\vert \leq \frac{1}{4}$ for any random variable taking values in $[0,1]$. For the value of $\gamma_2$, there is 
\begin{equation*}
    \vert -\sigma^2(F(S,\bm{v}'))+\sigma^2(F(S,\bm{v}))\vert \leq \vert U^2(F(S,\bm{v}'))-U^2(F(S,\bm{v}))\vert +\vert (U^1(F(S,\bm{v}')))^2-(U^1(F(S,\bm{v})))^2\vert.
\end{equation*}
By \eqref{eq:nmomentLipschitz} and 
\begin{align*}
    \vert (U^1(F(S,\bm{v}')))^2-(U^1(F(S,\bm{v})))^2\vert & \leq \vert U^1(F(S,\bm{v}'))-U^1(F(S,\bm{v}))\vert \vert U^1(F(S,\bm{v}'))+U^1(F(S,\bm{v}))\vert\\
    &\leq 2\vert U^1(F(S,\bm{v}'))-U^1(F(S,\bm{v}))\vert,
\end{align*}
 we have that 
 \begin{equation*}
     \vert -\sigma^2(F(S,\bm{v}'))+\sigma^2(F(S,\bm{v}))\vert \leq\frac{6}{1+\sum_{i\in S}v_i}\left[\sum_{i \in S} (v_i' - v_i) \right].
 \end{equation*}
\end{proof}

\begin{remark}
Simply consider the negative variance alone does not provide a good risk criterion for our problem here. But we still discuss it here because it is an important building block for other risk criteria including the Sharpe ratio, Sortino ratio, and mean-variance. 
\end{remark}

\subsection{Mean-variance}
Given a weight $\rho>0$, we define the the mean-variance for $F\in \mathcal{D}[0,1]$ as 
\begin{equation*}
    U^{MV}_\rho(F)=U^1(F)-\rho \sigma^2(F).
\end{equation*}
By the boundedness and the Lipschitz property of the mean and variance, it is immediate to see that $\vert U^{MV}_\rho(F) \vert\leq 1+\frac{\rho}{4}$ and  
\begin{equation*}
    \vert U^{MV}_\rho(F(S,\bm{v}'))-U^{MV}_\rho(F(S,\bm{v}))\vert \leq\frac{2+6\rho}{1+\sum_{i\in S}v_i}\left[\sum_{i \in S} (v_i' - v_i) \right].
\end{equation*}

\subsection{Sortino ratio}
 Given a minimum average reward $r\in[0,1]$ and the regularization factor $\epsilon$, for $F\in \mathcal{D}[0,1]$ we define 
 \begin{equation*}
     So_{r,\epsilon}(F)=\frac{U^1(F)-r}{\sqrt{\epsilon+TSV_r(F)}}.
 \end{equation*}
\begin{proposition} \label{lem:Sortino-ass2}
$So_{r,\epsilon}(F)$ satisfies Assumption~\ref{ass:bounded} and Assumption~\ref{ass:lipschitz} with $\gamma_1 = \frac{1}{\sqrt{\epsilon}}$ and $\gamma_2 = 2\epsilon^{-1/2}+\epsilon^{-3/2}$.
\end{proposition}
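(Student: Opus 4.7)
The plan is to closely mimic the proof of Proposition~\ref{lem:SR-ass2} for the Sharpe ratio, replacing the variance $\sigma^2(F)$ by the below-target semi-variance $TSV_r(F)$ throughout. The key structural facts I will use are: $TSV_r(F) \geq 0$ on $\mathcal{D}[0,1]$, so $\sqrt{\epsilon + TSV_r(F)} \geq \sqrt{\epsilon}$; the mean $U^1$ satisfies the Lipschitz estimate \eqref{eq:nmomentLipschitz} with constant $2$; and by Proposition~\ref{lem:BTV-ass2}, $TSV_r$ is Lipschitz with constant $2r^2 \leq 2$.

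For the boundedness part ($\gamma_1 = \epsilon^{-1/2}$), I would simply observe that $U^1(F), r \in [0,1]$ yield $|U^1(F) - r| \leq 1$, and combine this with $\sqrt{\epsilon + TSV_r(F)} \geq \sqrt{\epsilon}$ to conclude $|So_{r,\epsilon}(F)| \leq 1/\sqrt{\epsilon}$.

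For the one-sided Lipschitz constant $\gamma_2$, I would split the difference $So_{r,\epsilon}(F(S,\bm{v}')) - So_{r,\epsilon}(F(S,\bm{v}))$ using a common-numerator trick into two parts exactly as in the Sharpe-ratio proof: a ``numerator'' piece controlled by $\frac{|U^1(F(S,\bm{v}'))-U^1(F(S,\bm{v}))|}{\sqrt{\epsilon + TSV_r(F(S,\bm{v}'))}}$, and a ``denominator'' piece of the form $|U^1(F(S,\bm{v})) - r|$ times the difference of the reciprocals of the two $\sqrt{\epsilon + TSV_r}$ terms. The first piece is bounded by $1/\sqrt{\epsilon}$ times the Lipschitz bound \eqref{eq:nmomentLipschitz} on $U^1$, giving $\frac{2\epsilon^{-1/2}}{1+\sum_{i \in S} v_i}\sum_{i \in S}(v_i' - v_i)$. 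The second piece I would handle via the identity $a^{-1/2} - b^{-1/2} = (b-a)/\bigl(\sqrt{ab}(\sqrt{a}+\sqrt{b})\bigr)$, using $\sqrt{\epsilon + TSV_r} \geq \sqrt{\epsilon}$ to pull out a factor $\frac{1}{2\epsilon^{3/2}}$, $|U^1 - r| \leq 1$, and Proposition~\ref{lem:BTV-ass2} to bound the $TSV_r$ difference; after using $r^2 \leq 1$ this contributes at most $\frac{\epsilon^{-3/2}}{1+\sum_{i \in S} v_i}\sum_{i \in S}(v_i' - v_i)$. Summing the two contributions yields the claimed $\gamma_2 = 2\epsilon^{-1/2} + \epsilon^{-3/2}$.

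There is no serious obstacle here since all the building blocks are already proved earlier in Appendix~A. The only reason the coefficient in front of $\epsilon^{-3/2}$ drops from $3$ in the Sharpe case to $1$ in the Sortino case is that the Lipschitz constant for $TSV_r$ (namely $2r^2 \leq 2$) is strictly smaller than that for $\sigma^2$ (namely $6$), so the single step worth any care is remembering to invoke $r \in [0,1]$ to absorb the $r^2$ factor into a universal constant.
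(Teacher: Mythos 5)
Your proposal is correct and follows essentially the same route as the paper, whose proof of Proposition~\ref{lem:Sortino-ass2} is itself just the remark ``follow the Sharpe-ratio argument, replacing the variance by the below-target semi-variance''; you fill in exactly that substitution, with the right constants ($2\epsilon^{-1/2}$ from the mean via \eqref{eq:nmomentLipschitz}, and $\tfrac{1}{2\epsilon^{3/2}}\cdot 2r^2 \leq \epsilon^{-3/2}$ from Proposition~\ref{lem:BTV-ass2} together with $r\leq 1$). Nothing further is needed.
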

\begin{proof}
Following similar argument for the Sharpe ratio, using the boundedness and the Lipschitz property of the mean and the below target semi-variance, we have that $So_{r,\epsilon}(F)\leq \frac{1}{\sqrt{\epsilon}}$ and 

\begin{equation*}
\vert So_{r,\epsilon}(F(S,\bm{v}'))-So_{r,\epsilon}(F(S,\bm{v}))\vert \leq \frac{2\epsilon^{-1/2}+\epsilon^{-3/2}}{1+\sum_{i\in S}v_i}\left[\sum_{i \in S} (v_i' - v_i) \right].
\end{equation*}
\end{proof}

\section{Appendix B: Thompson Sampling} \label{app:thompson-sampling}
We prove the guarantee of $\mathtt{RiskAwareTS}$ in this section. 

Before proving the theoretical guarantee, we first present the high-level idea of the $\mathtt{RiskAwareTS}$ algorithm. At the beginning of the algorithm, there is a warm start stage when every product is repeatedly served until a no-purchase outcome is observed. Similar to $\mathtt{RiskAwareUCB}$, the remaining time steps are divided into small episodes. During each episode $\ell$, the same assortment $S_{\ell}$ is repeatedly provided to the online user until a no-purchase outcome is observed. Specifically, in each episode $\ell$, we are providing the assortment
\begin{equation*}
     \argmax_{ S \subset [N], |S| \leq K } U (F(S,\tilde{\bm{v}}^{\ell})),
\end{equation*}
where $\tilde{\bm{v}}^{\ell}$ is the virtual preference parameters generated by correlated sampling before the start of episode $\ell$.

Let $t_{i, \ell}$ be the number of times the online users buy product $i$ in the $\ell$th episode and $\mathcal{T}_i(\ell)$ be the collection of episodes for which product $i$ is served until episode $\ell$ (exclusive). Define $T_i(\ell) \eqdef |\mathcal{T}_i(\ell)|$, $n_i(\ell) \eqdef \sum_{\ell' \in \mathcal{T}_i(\ell)} t_{i, \ell' } $ and 
\begin{equation}
    \bar{v}_i^\ell \eqdef \frac{ n_i(\ell) }{ T_i(\ell) },
\end{equation}
which is an unbiased estimator of the unknown preference parameter $v_i$.
Generate $K$ \textit{i.i.d.}\ samples $\{\theta^{(j)}_{\ell} \}_{j = 1}^K$ from $\mathcal{N}(0, 1)$ and define 
$$ \mu_i^{(j)}(\ell) \eqdef \bar{v}_i^\ell + \theta_{\ell}^{(j)} \hat{\sigma}_i^\ell, $$
where 
\begin{equation*}
    \hat{\sigma}_i^\ell = \sqrt{ \frac{50 \bar{v}_i^{\ell}(\bar{v}_i^{\ell} + 1) }{T_i(\ell)} }  + \frac{75 \sqrt{\log(TK)} }{ T_i(\ell) }.
\end{equation*}
The $i$th component of the virtual preference parameters $\tilde{\bm{v}}^{\ell}$ is given by
$
\tilde{v}^{\ell}_i = \max_{1 \leq j \leq K} \mu_i^{(j)}(\ell).
$
The concrete details of $\mathtt{RiskAwareTS}$ is presented in Algorithm~\ref{alg:riskaware-ts}.

\begin{algorithm}[t]
\DontPrintSemicolon
\caption{$\mathtt{RiskAwareTS}(N, K, \bm{r}, U, T)$}
\label{alg:riskaware-ts}
Initialize $t \leftarrow 1$, $\ell \leftarrow 1$, $\mathcal{T}_i(\ell) \leftarrow \emptyset$ for $i \in [N]$ and $n_i(\ell) \leftarrow 0$ for $i \in [N]$ \\
\While {$t \leq T$}{
\If{there exists a product $i$ such that $\mathcal{T}_i(\ell) = \emptyset$}{
Pick a product $i$ such that $\mathcal{T}_i(\ell) = \emptyset$ and set $S_{\ell} \leftarrow \{i\}$
}
\Else{
\For{$j \in [K]$}{
Obtain a sample $\theta_{\ell}^{(j)}$ from Gaussian distribution $\mathcal{N}(0, 1)$
}
Let $n_i(\ell) \leftarrow \sum_{\ell' \in \mathcal{T}_i(\ell)} t_{i, \ell' }$ and $T_i(\ell) \leftarrow |\mathcal{T}_i(\ell)|$ for $i \in [N]$ \\
Set $\mu_i^{(j)}(\ell) \leftarrow \bar{v}_i^\ell + \theta_{\ell}^{(j)} \hat{\sigma}_i^\ell $ where $\bar{v}_i^\ell = \frac{n_i(\ell)}{ T_i(\ell) }$ and $\hat{\sigma}_i^\ell = \sqrt{ \frac{50 \bar{v}_i^{\ell}(\bar{v}_i^{\ell} + 1) }{T_i(\ell)} }  + \frac{75 \sqrt{\log(TK)} }{ T_i(\ell) }$ \\
$\tilde{v}^{\ell}_i \leftarrow \max_{1 \leq j \leq K} \mu_i^{(j)}(\ell)$ for $i \in [N]$ \\
$S_{\ell} \leftarrow \argmax_{ S \subset [N], |S| \leq K } U (F(S,\tilde{\bm{v}}^{\ell}))$
}
Initialize $t_{i, \ell} \leftarrow 0$ for $i \in [N]$ \\
\Repeat {$t > T$ or $c_{t-1} = 0$}{
Serve $S_{\ell}$ and observe customer choice $c_t$ \\
\lIf{$c_t \neq 0$}{$t_{c_t, \ell} \leftarrow t_{c_t, \ell} + 1$
}
$t \leftarrow t+1$
}
\For{$i \in [N]$}{
\lIf{$i \in S_{\ell}$}{$\mathcal{T}_{i}(\ell+1) \leftarrow \mathcal{T}_{i}(\ell) \cup \{ \ell \}$}
\lElse{$\mathcal{T}_{i}(\ell+1) \leftarrow \mathcal{T}_{i}(\ell)$}
}
$\ell \leftarrow \ell + 1$
}
\end{algorithm}

To prove the guarantee, we will need a stronger version of Assumption \ref{ass:lipschitz}.
\begin{assumption}[One-sided Lipschitz Condition] \label{ass:lipschitz2}
For any $\bm{v}' \geq \bm{v}$, i.e., $v_i'\geq v_i$ for all $i\in[N]$, and $S \subset [N]$, it holds that
\begin{equation*}
U (F(S,\bm{v}'))  - U( F(S,\bm{v}))  \\ 
\leq \frac{ \tilde{\gamma}_2 }{1+\sum_{i \in S} v_i'} \left[\sum_{i \in S} (v_i' - v_i) \right].
\end{equation*}
\end{assumption}

\begin{remark}
Since the verification of Assumption \ref{ass:lipschitz} for all the risk criteria above are based on Lemma \ref{LipschitzLemma}, by Remark \ref{remark:strongResult}, it is easy to prove that all the risk criteria listed above also satisfies Assumption \ref{ass:lipschitz2} following the same proof. 
\end{remark}

\begin{proposition} \label{pro:TSlipschitz}
Suppose the risk criterion $U$ satisfies Assumption \ref{ass:quasi-convex}, \ref{ass:bounded} and \ref{ass:lipschitz2}. For any $\bm{v}'$, $\bm{v}$, let $S^* \subset [N]$ be the optimal assortment for $\bm{v}$ under $K$ cardinality constraint, it holds that
\begin{equation*}
 U (F(S^*,\bm{v}))  - U( F(S^*,\bm{v}'))  \\ 
\leq \frac{ \tilde{\gamma}_2 }{1+\sum_{i \in S} \max{\{ v_i,v_i'\}}} \left[\sum_{i \in S} \vert v_i' - v_i \vert \right].
\end{equation*}
\end{proposition}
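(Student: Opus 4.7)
The plan is to define the componentwise maximum $\bm w$ by $w_i = \max\{v_i, v_i'\}$ for $i \in S^*$ (extending arbitrarily outside $S^*$, say $w_i = v_i'$), and then split
\[
U(F(S^*,\bm v)) - U(F(S^*,\bm v')) = \bigl[U(F(S^*,\bm v)) - U(F(S^*,\bm w))\bigr] + \bigl[U(F(S^*,\bm w)) - U(F(S^*,\bm v'))\bigr].
\]
I will show the first bracket is $\leq 0$, and then bound the second bracket directly via Assumption~\ref{ass:lipschitz2}. Since $w_i - v_i' = (v_i - v_i')_+ \leq |v_i - v_i'|$ and $1 + \sum_{i\in S^*} w_i = 1 + \sum_{i\in S^*}\max\{v_i,v_i'\}$, the Assumption~\ref{ass:lipschitz2} bound will match the right-hand side of the proposition on the nose.

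The hard part — and the only place the hypothesis ``$S^*$ is optimal for $\bm v$'' enters — is proving $U(F(S^*,\bm v)) \leq U(F(S^*,\bm w))$. Quasiconvexity of the map $\bm u \mapsto U(F(S^*,\bm u))$ (established inside the proof of Lemma~\ref{lem:monomax}) by itself does \emph{not} give monotonicity along the ray from $\bm v$ to $\bm w$, so I will invoke Lemma~\ref{lem:optimistic} applied to $V(\bm u) \eqdef U(F(S^*,\bm u))$ with the anchor point $\bar{\bm u} = \bm v|_{S^*}$. The hypothesis of Lemma~\ref{lem:optimistic} requires $V(\bar{\bm u}) \geq V(\bm u)$ for every $\bm u$ with $u_i \in \{0, v_i\}$. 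The key observation is that zeroing out a coordinate $u_i$ corresponds exactly to removing product $i$ from the assortment, so for $T = \{i \in S^* : u_i = v_i\}$ one has $F(S^*, \bm u) = F(T,\bm v)$, and hence the hypothesis reduces to $U(F(S^*,\bm v)) \geq U(F(T,\bm v))$ for every $T \subset S^*$. This is immediate from the optimality of $S^*$ for $\bm v$, because $|T| \leq |S^*| \leq K$. Lemma~\ref{lem:optimistic} then delivers $V(\bm w|_{S^*}) \geq V(\bm v|_{S^*})$, i.e., $U(F(S^*,\bm w)) \geq U(F(S^*,\bm v))$, as desired.

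For the second bracket, $\bm w \geq \bm v'$ globally by construction, so Assumption~\ref{ass:lipschitz2} gives
\[
U(F(S^*,\bm w)) - U(F(S^*,\bm v')) \leq \frac{\tilde\gamma_2 \sum_{i\in S^*}(w_i - v_i')}{1 + \sum_{i\in S^*} w_i}
\leq \frac{\tilde\gamma_2 \sum_{i\in S^*}|v_i - v_i'|}{1 + \sum_{i\in S^*}\max\{v_i,v_i'\}},
\]
and combining with the first step finishes the proof.

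The main obstacle is the conceptual step in the middle paragraph: recognizing that the ``local $\{0, v_i\}$-grid optimality'' hypothesis demanded by Lemma~\ref{lem:optimistic} is secretly the statement that $S^*$ beats all of its \emph{subsets} as an assortment, which is a strictly weaker consequence of global optimality. Once this translation is in place, everything else is a routine application of the pre-existing lemmas and of Assumption~\ref{ass:lipschitz2}.
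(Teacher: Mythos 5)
Your proposal is correct and follows essentially the same route as the paper's proof: form the componentwise maximum $\bm{u}=\max\{\bm{v},\bm{v}'\}$, deduce $U(F(S^*,\bm{v}))\leq U(F(S^*,\bm{u}))$ from quasiconvexity via Lemma~\ref{lem:optimistic} (the paper cites ``the proof of Lemma~\ref{lem:monomax}'' for this), and then apply Assumption~\ref{ass:lipschitz2} with $\bm{u}\geq\bm{v}'$. The only difference is that you make explicit the verification of Lemma~\ref{lem:optimistic}'s anchor hypothesis (zeroed coordinates correspond to subsets $T\subset S^*$, handled by optimality of $S^*$), which the paper leaves implicit.
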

\begin{proof}
Let $\bm{u}=\max{\{\bm{v}',\bm{v}\}}$, i.e., $u_i=\max{\{v_i',v_i\}}$ for all $i\in [N]$. Since $\bm{u}\geq \bm{v}$, by Assumption \ref{ass:quasi-convex} and the proof of Lemma \ref{lem:monomax}, we know that 
\begin{equation*}
     U (F(S^*,\bm{v}))  \leq  U( F(S^*,\bm{u})). 
\end{equation*}
Therefore, using Assumption \ref{ass:lipschitz2}
\begin{align*}
     U (F(S^*,\bm{v}))  - U( F(S^*,\bm{v}'))  \leq  U (F(S^*,\bm{u}))  - U( F(S^*,\bm{v}')) \leq  \frac{ \tilde{\gamma}_2 }{1+\sum_{i \in S} u_i} \left[\sum_{i \in S} (u_i - v_i') \right].
\end{align*}
By the definition of $\bm{u}$, one can conclude the proof.
\end{proof}

Now we prove the regret upper bound for $\mathtt{RiskAwareTS}$. The proof below is a mild modification of the proof to \cite[Theorem 1]{DBLP:conf/colt/AgrawalAGZ17} and we include it here for the completeness of the paper. 
\begin{theorem} \label{thm:TSmain}
Suppose the risk criterion $U$ satisfies Assumption \ref{ass:quasi-convex}, \ref{ass:bounded} and \ref{ass:lipschitz}. The regret \eqref{eq:regret} incurred by the decision maker using $\mathtt{RiskAwareTS}$ is upper bounded by $\tilde{\mathcal{O}} ( \sqrt{NT} )$ after $T$ time steps, where $\tilde{\mathcal{O}}$ hides poly-logarithmic factors in $N$ and $T$.
\end{theorem}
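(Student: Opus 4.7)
The plan is to follow the Thompson Sampling analysis of Agrawal, Avadhanula, Goyal, and Zeevi (2017) for MNL bandits, substituting for the expected revenue the general risk criterion $U$ via Lemma~\ref{lem:monomax} and Proposition~\ref{pro:TSlipschitz}. As noted in the remark preceding Proposition~\ref{pro:TSlipschitz}, Assumption~\ref{ass:lipschitz2} is a strengthening of Assumption~\ref{ass:lipschitz} satisfied by every risk criterion in Table~\ref{table-of-reward-function}, so invoking it in place of Assumption~\ref{ass:lipschitz} does not lose generality.

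First I would reuse the episode decomposition from the proof of Theorem~\ref{thm:main}: after absorbing the warm-start cost of at most $N$ episodes into an additive $\tilde{\mathcal{O}}(N)$ term and using that conditional on $\mathcal{H}_\ell$ the episode length $l_\ell$ is geometric with mean at most $1+\sum_{i\in S_\ell} v_i$, I obtain
$$
\reg_T \leq \tilde{\mathcal{O}}(N) + \E\Bigg[\sum_\ell \big(1+\sum_{i\in S_\ell} v_i\big)\big(U(F(S^*,\bm{v}))-U(F(S_\ell,\bm{v}))\big)\Bigg].
$$
Next I would introduce two concentration events: one (analogous to Lemma~\ref{lem:conf-bound}) controlling $|\bar{v}_i^\ell - v_i|$ by $O(\sqrt{v_i\log(NT)/T_i(\ell)} + \log(NT)/T_i(\ell))$, and one (Gaussian tail bound applied to the $K$ correlated samples) controlling $|\tilde{v}_i^\ell - \bar{v}_i^\ell|$ by $O(\hat{\sigma}_i^\ell\sqrt{\log(TK)})$. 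Both events fail with probability at most $O(1/\ell)$, so their complements contribute only a $\tilde{\mathcal{O}}(N)$ additive term to $\reg_T$ by Assumption~\ref{ass:bounded}.

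The TS-specific heart of the proof is the decomposition $(A_\ell)+(B_\ell)$ where $(A_\ell) \eqdef U(F(S^*,\bm{v}))-U(F(S_\ell,\tilde{\bm{v}}^\ell))$ and $(B_\ell) \eqdef U(F(S_\ell,\tilde{\bm{v}}^\ell))-U(F(S_\ell,\bm{v}))$. The deviation term $(B_\ell)$ is controlled on the good events by Assumption~\ref{ass:lipschitz2} applied to $\max(\tilde{\bm{v}}^\ell,\bm{v})$, giving a bound of $O(\gamma_2\sum_{i\in S_\ell}(\sqrt{v_i\log(NT)/T_i(\ell)} + \log(NT)/T_i(\ell)))$. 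For the optimism term $(A_\ell)$, the correlated sampling in Algorithm~\ref{alg:riskaware-ts} — all coordinates share the same Gaussian $\theta_\ell^{(j)}$ within each $j\in[K]$ — combined with anti-concentration of the maximum of $K$ i.i.d.\ Gaussians ensures that with some absolute constant probability $c>0$, at least one $\theta_\ell^{(j)}$ is large enough to force $\tilde{v}_i^\ell \geq v_i$ \emph{simultaneously} for every $i\in[N]$. On this event Lemma~\ref{lem:monomax} gives $U(F(S_\ell,\tilde{\bm{v}}^\ell)) \geq U(F(S^*,\tilde{\bm{v}}^\ell)) \geq U(F(S^*,\bm{v}))$, so $(A_\ell)\leq 0$. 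The standard Thompson Sampling boosting argument of Agrawal and Goyal then converts the expected positive part of $(A_\ell)$ into $(1/c)\cdot\E[|U(F(S^*,\tilde{\bm{v}}^\ell))-U(F(S^*,\bm{v}))|\mid\mathcal{H}_\ell]$, which Proposition~\ref{pro:TSlipschitz} bounds by $O(\sum_{i\in S^*}|\tilde{v}_i^\ell - v_i|/(1+\sum_i v_i))$ — matching the shape of the bound on $(B_\ell)$.

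Summing the per-episode bounds, multiplying by the $(1+\sum_{i\in S_\ell} v_i)$ episode-length factor, and applying the same $\sum_{j\leq n_i} j^{-1/2}\leq 2\sqrt{n_i}$, $\sum_i v_i n_i\leq T$, and Jensen manipulations as in the proof of Theorem~\ref{thm:main} closes the argument with $\reg_T = \tilde{\mathcal{O}}(\sqrt{NT})$. The main obstacle is the boosting step for $(A_\ell)$: because Lemma~\ref{lem:monomax} requires the sample to be componentwise above $\bm{v}$, one can neither mimic the coordinate-decoupled TS analysis used for classical linear reward nor argue optimism through a single scalar ``optimistic value''. The correlated sampling scheme is essential to make the uniform-optimism event have constant probability, and Proposition~\ref{pro:TSlipschitz} (rather than the weaker Assumption~\ref{ass:lipschitz}) is essential to provide the two-sided control on $|U(F(S^*,\tilde{\bm{v}}^\ell)) - U(F(S^*,\bm{v}))|$ needed on the non-optimistic event.
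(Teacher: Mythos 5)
Your split into the deviation term $(B_\ell)$ and the optimism term $(A_\ell)$ is exactly the paper's decomposition $\reg_T=\reg_T^1+\reg_T^2$, and your treatment of $(B_\ell)$ (concentration of $\bar v_i^\ell$, control of $|\tilde v_i^\ell-\bar v_i^\ell|$ via the maximum of $K$ Gaussians, Proposition~\ref{pro:TSlipschitz}, then the $(*),(**),(***)$ bookkeeping from Theorem~\ref{thm:main}) matches the paper's bound on $\reg_T^2$. The gap is in the optimism term. The conversion you propose, bounding the positive part of $(A_\ell)$ by $(1/c)\,\E\bigl[\,|U(F(S^*,\tilde{\bm{v}}^\ell))-U(F(S^*,\bm{v}))|\;\big|\;\mathcal{H}_\ell\bigr]$, actually holds pointwise with constant $1$ (since $S_\ell$ maximizes $U(F(\cdot,\tilde{\bm{v}}^\ell))$, we have $(A_\ell)\leq U(F(S^*,\bm{v}))-U(F(S^*,\tilde{\bm{v}}^\ell))$), which is a warning sign that it cannot be the engine of the proof: after Proposition~\ref{pro:TSlipschitz} it leaves you with $\sum_{i\in S^*}|\tilde v_i^\ell-v_i|$, i.e., confidence widths of the products in $S^*$. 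These do not ``match the shape'' of the $(B_\ell)$ bound in the sense that matters: the widths in $(B_\ell)$ are over the served assortment $S_\ell$, so $T_i(\ell)$ grows each time such a term appears and the sum collapses to $\tilde{\mathcal{O}}(\sqrt{NT})$; the products of $S^*$ need not be offered, their $T_i(\ell)$ need not grow, and $\sum_{\ell}\sum_{i\in S^*}\sqrt{v_i/(T_i(\ell)\vee 1)}$ can be linear in $T$. So the final summation step of your argument fails.

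This is precisely the difficulty the paper's proof (following Agrawal--Avadhanula--Goyal--Zeevi) is built to avoid. It defines the optimistic episodes $\mathcal{T}=\{\ell:\tilde v_i^\ell\geq v_i\ \text{for all } i\in S^*\cup S_\ell\}$ and charges every non-optimistic episode $\tau\in\mathcal{E}^{An}(\ell)$ to the preceding optimistic episode $\ell$: by Lemma~\ref{lem:monomax}, $U(F(S^*,\bm{v}))\leq U(F(S_\ell,\tilde{\bm{v}}^\ell))$, and by optimality of $S_\tau$ under $\tilde{\bm{v}}^\tau$ the per-episode loss is at most $\Delta R_{\ell,\tau}$, which Proposition~\ref{pro:TSlipschitz} bounds by $\frac{\tilde{\gamma}_2(K+1)}{1+V(S_\ell)}\sum_{i\in S_\ell}\bigl(|\tilde v_i^\ell-v_i|+|\tilde v_i^\tau-v_i|\bigr)$ --- widths of \emph{served} products only. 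The constant-probability simultaneous-optimism event you correctly identify is then used not as a multiplicative conversion factor but to control the number of episodes charged to each $\ell$, via $\bigl[\E\bigl(|\mathcal{E}^{An}(\ell)|^2\bigr)\bigr]^{1/2}\leq e^{12}/K+\sqrt{30}$; boosting over the $K$ correlated samples is what makes this moment small enough to cancel the extra $(K+1)$ factor coming from $1+V(S_\tau)$, and Cauchy--Schwarz over episodes finishes the bound $\reg_T^1=\tilde{\mathcal{O}}(\sqrt{NT})$. Without this charging scheme (or an equivalent ``saturated/unsaturated'' device that ultimately expresses the regret through the uncertainty of the assortments actually played), your outline does not close.
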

\begin{proof}
For completeness, we first introduce some notations, which are already defined in \cite[Appendix D]{DBLP:conf/colt/AgrawalAGZ17}. 

Given assortment $S$, let $V(S)=\sum_{i\in S}v_i$. Given $\ell,\tau\leq L$, define 
\begin{align*}
    \Delta R_\ell &=(1+V(S_\ell))[U(F(S_\ell,\tilde{\bm{v}}^\ell ))-U(F(S_\ell,\bm{v}))],\\
    \Delta R_{\ell,\tau} &=(1+V(S_\tau))[U(F(S_\ell,\tilde{\bm{v}}^\ell ))-U(F(S_\ell,\tilde{\bm{v}}^\tau)].
\end{align*}
Next, we denote by $\mathcal{A}_0$ the probability space $\Omega$ and 
\begin{equation*}
    \mathcal{A}_\ell=\left\{ \vert \bar{v}_i(\ell)-v_i\vert\geq \sqrt{\frac{24v_i\log(\ell+1)}{T_i(\ell)}}+\frac{48\log(\ell+1)}{T_i(\ell)}, \mbox{ for some }i=1,\cdots,N \right\}.
\end{equation*}
Next we define $\mathcal{T}=\{ \ell: \tilde{v}^\ell_i \geq v_i \mbox{ for all } i\in S^*\cup S_\ell\}$, which indicates the ``optimistic" episodes. Then let $succ(\ell)=\min\{ \bar{\ell}\in \mathcal{T}:\bar{\ell}>\ell \}$, which is the next optimistic episode after episode $\ell$. Finally, we define $\mathcal{E}^{An}(\ell)=\{ \tau:\tau\in(\ell,succ(\ell)) \}$ for all $\ell\in\mathcal{T}$, which is the collection of ``non-optimistic" episodes between two adjacent optimistic episodes.

Now we consider the regret
\begin{equation}\label{eq:TSregret}
    \begin{aligned}
    \reg_T & = \E\left[ \sum_{ \ell = 1}^L l_{\ell} \left( U (F(S^*,\bm{v})) - U (F(S_{\ell},\bm{v}))\right)  \right]\\
    &=\E\left[ \sum_{ \ell = 1}^L l_{\ell} \left( U (F(S^*,\bm{v})) - U (F(S_{\ell},\tilde{\bm{v}}^\ell ))\right)  \right]+\E\left[ \sum_{ \ell = 1}^L l_{\ell} \left( U (F(S_\ell,\tilde{\bm{v}}^\ell )) - U (F(S_{\ell},\bm{v}))\right)  \right]\\
    & \eqdef \reg_T^1+\reg_T^2.
    \end{aligned}
\end{equation}
Next we will show the upper bounds of $\reg_T^1$ and $\reg_T^2$ respectively.

{\em Bounding $\reg_T^2$}: 
By taking conditional probability with respect to the history $\mathcal{H}_\ell$, following the proof of \eqref{main-thm:equ-1}, we have 
\begin{equation}\label{eq:reg2eq1}
    \reg_T^2 = \mathbb{E}\left[ \sum_{\ell=1}^L \Delta R_\ell \right].
\end{equation}
Next we bound $\Delta R_\ell$ in two scenarios
\begin{multline} \label{eq:reg2eq2}
    \mathbb{E}[\Delta R_\ell]=\mathbb{E}[\Delta R_\ell \ind{\mathcal{A}_{\ell-1}}]+\mathbb{E}[\Delta R_\ell \ind{\mathcal{A}^c_{\ell-1}}] \\ \leq 2\gamma_1(K+1)\mathbb{P}(\mathcal{A}_{\ell-1})+\mathbb{E}[\Delta R_\ell \ind{\mathcal{A}^c_{\ell-1}}]\leq \frac{2\gamma_1(K+1)}{\ell^2}+\mathbb{E}[\Delta R_\ell \ind{\mathcal{A}^c_{\ell-1}}],
\end{multline}
where the first inequality follows from Assumption \ref{ass:bounded} and $V(S)\leq K$, and the second inequality follows from \cite[Lemma 7]{DBLP:conf/colt/AgrawalAGZ17}, i.e.,  $\mathbb{P}(\mathcal{A}_{\ell-1})\leq \frac{1}{\ell^2}$.
By Proposition \ref{pro:TSlipschitz}, we have 
\begin{equation*}
    \Delta R_\ell\leq (1+V(S_\ell))\frac{\tilde{\gamma}_2}{1+\sum_{i\in S_\ell}\max\{\tilde{v}^\ell_i ,v_i\}}\sum_{i\in S_\ell}\vert \tilde{v}^\ell_i -v_i\vert\leq \tilde{\gamma}_2 \sum_{i\in S_\ell}\vert \tilde{v}^\ell_i -v_i\vert.
\end{equation*}
Therefore, 
\begin{align*}
    \mathbb{E}[\Delta R_\ell \ind{\mathcal{A}^c_{\ell-1}}] &\leq \mathbb{E}\left[\tilde{\gamma}_2 \sum_{i\in S_\ell}\vert \tilde{v}^\ell_i -v_i\vert \ind{\mathcal{A}^c_{\ell-1}} \right]\\
    &\leq \mathbb{E} \left[\tilde{\gamma}_2 \sum_{i\in S_\ell}\vert \tilde{v}^\ell_i -\bar{v}_i(\ell)\vert \ind{\mathcal{A}^c_{\ell-1}} \right]+\mathbb{E} \left[\tilde{\gamma}_2 \sum_{i\in S_\ell}\vert \bar{v}_i(\ell)-v_i\vert \ind{\mathcal{A}^c_{\ell-1}} \right]\\
    &\leq \tilde{\gamma}_2\mathbb{E} \left[ \sum_{i\in S_\ell}\vert \tilde{v}^\ell_i -\bar{v}_i(\ell)\vert  \right]+\tilde{\gamma}_2\mathbb{E} \left[\sum_{i\in S_\ell}\left(\sqrt{\frac{24v_i\log(\ell+1)}{T_i(\ell)}}+\frac{48\log(\ell+1)}{T_i(\ell)} \right)\right],
\end{align*}
where the last inequality follows from the definition of $\mathcal{A}_{\ell-1}$. By the definition of $\tilde{v}^\ell_i $ and $\bar{v}_i(\ell)$, there is 
\begin{equation*}
    \mathbb{E} \left[ \sum_{i\in S_\ell}\vert \tilde{v}^\ell_i -\bar{v}_i(\ell)\vert  \right]=\mathbb{E} \left[ \sum_{i\in S_\ell}\left\vert \max_{j=1,\cdots,K }\{\theta^{(j)}_\ell  \}\right\vert \hat{\sigma}_i(\ell)  \right]=\mathbb{E} \left[ \sum_{i\in S_\ell} \mathbb{E}\left[\left\vert \max_{j=1,\cdots,K }\{\theta^{(j)}_\ell  \}\right\vert\right] \hat{\sigma}_i(\ell)  \right],
\end{equation*}
where $\theta^{(j)}_\ell$ are independent standard normal distributed random variables given $\ell$. By Theorem~1 in \cite{Kamath20}, it is easy to verify that
\begin{equation*}
    \mathbb{E}\left[\left\vert \max_{j=1,\cdots,K }\{\theta^{(j)}_\ell  \}\right\vert\right]\leq 4\sqrt{\log K}.
\end{equation*}
Since $\ell\leq T$, by the definition of $\hat{\sigma}_i(\ell)$, we have
\begin{equation}\label{eq:reg2eq3}
    \mathbb{E}[\Delta R_\ell \ind{\mathcal{A}^c_{\ell-1}}] \lesssim  \mathbb{E}\left[ \sum_{i\in S_\ell}\sqrt{\frac{v_i \log (TK)}{T_i(\ell)}} \right]+ \mathbb{E}\left[ \sum_{i\in S_\ell}\frac{ \log (TK)}{T_i(\ell)} \right] .
\end{equation}
Combining \eqref{eq:reg2eq1}, \eqref{eq:reg2eq2} and \eqref{eq:reg2eq3} gives
\begin{equation}\label{eq:reg2eq4}
    \reg_T^2 \lesssim K \mathbb{E}\left[\sum_{\ell=1}^L \frac{1}{\ell^2}\right] + \log (TK) \mathbb{E}\left[ \sum_{\ell=1}^L\sum_{i\in S_\ell}\sqrt{\frac{v_i }{T_i(\ell)}} \right]+ \log (TK) \mathbb{E}\left[ \sum_{\ell=1}^L\sum_{i\in S_\ell}\frac{1 }{T_i(\ell)} \right] .
\end{equation}
Finally we apply the argument for $(*)$, $(**)$ and $(***)$ in the proof of Theorem~\ref{thm:main} to \eqref{eq:reg2eq4} and we obtain $\reg_T^2 = \tilde{\mathcal{O}}( \sqrt{NT}) $.

{\em Bounding $\reg_T^1$}: It remains to show that $\reg_T^1\leq \tilde{\mathcal{O}}( \sqrt{NT}) $. Notice that 
\begin{equation}\label{eq:reg2eq5}
\begin{aligned}
    \reg_T^1 &=\mathbb{E}\left[ \sum_{\ell=1}^L \ind{\ell\in\mathcal{T}}  \sum_{\tau\in \mathcal{E}^{An}(\ell)} l_\tau [ U (F(S^*,\bm{v})) - U (F(S_{\tau},\tilde{\bm{v}}^\tau))]\right]\\
    &\leq \mathbb{E}\left[ \sum_{\ell=1}^L \ind{\ell\in\mathcal{T}} \sum_{\tau\in \mathcal{E}^{An}(\ell)} l_\tau [ U (F(S_{\ell},\tilde{\bm{v}}^\ell )) - U (F(S_{\tau},\tilde{\bm{v}}^\tau))]\right]\\
    &\leq \mathbb{E}\left[ \sum_{\ell=1}^L \ind{\ell\in\mathcal{T}} \sum_{\tau\in \mathcal{E}^{An}(\ell)} \Delta R_{\ell,\tau}\right],
\end{aligned}
\end{equation}
where the first inequality follows from Lemma \ref{lem:monomax} and the second inequality follows from the optimality of $S_\tau$ under parameter $\tilde{\bm{v}}^\tau$. Similar to \eqref{eq:reg2eq2}, we bound $\Delta R_{\ell,\tau}$ in two scenarios 
\begin{equation}\label{eq:reg2eq6}
    \begin{aligned}
    \mathbb{E}\left[ \sum_{\tau\in \mathcal{E}^{An}(\ell)} \Delta R_{\ell,\tau} \right] &=\mathbb{E}\left[ \sum_{\tau\in \mathcal{E}^{An}(\ell)} \Delta R_{\ell,\tau} \ind{\mathcal{A}_{\ell-1}}+\Delta R_{\ell,\tau} \ind{\mathcal{A}^c_{\ell-1}}  \right]\\
    &\leq 2\gamma_1(K+1)\mathbb{E}[\vert \mathcal{E}^{An}(\ell)\vert \ind{\mathcal{A}_{\ell-1}} ]+\mathbb{E}\left[ \sum_{\tau\in \mathcal{E}^{An}(\ell)} \Delta R_{\ell,\tau} \ind{\mathcal{A}^c_{\ell-1}}  \right].
    \end{aligned}
\end{equation}
By \cite[Lemma 5]{DBLP:conf/colt/AgrawalAGZ17}, i.e., 
\begin{equation} \label{ts-key-lemma}
    \left[\mathbb{E}\left(\vert \mathcal{E}^{An}(\ell)\vert^2\right)\right]^{1/2}\leq \frac{e^{12}}{K}+\sqrt{30},
\end{equation}
and \cite[Lemma 7]{DBLP:conf/colt/AgrawalAGZ17}, i.e.,  $\mathbb{P}(\mathcal{A}_{\ell-1})\leq \frac{1}{\ell^2}$, we have 
\begin{equation}
    \begin{aligned}
    (K+1)\mathbb{E}[\vert \mathcal{E}^{An}(\ell)\vert \ind{\mathcal{A}_{\ell-1}} ]\leq (K+1)\left[\mathbb{E}\left(\vert \mathcal{E}^{An}(\ell)\vert^2\right)\right]^{1/2}\left[\mathbb{P}(\mathcal{A}_{\ell-1})\right]^\frac{1}{2} \lesssim  \frac{(K+1)}{\ell}.
    \end{aligned}
\end{equation}
For the second term in \eqref{eq:reg2eq6}, by Proposition \ref{pro:TSlipschitz}
\begin{align*}
    \Delta R_{\ell,\tau} &\leq (1+V(S_\tau))\frac{\tilde{\gamma}_2}{1+\sum_{i\in S_\ell}\max\{\tilde{v}^\ell_i ,\tilde{v}^\tau_i\}}\sum_{i\in S_\ell}\vert \tilde{v}^\ell_i -\tilde{v}^\tau_i\vert\\
    &\leq \frac{\tilde{\gamma}_2 (K+1)}{1+\sum_{i\in S_\ell}\tilde{v}^\ell_i}\sum_{i\in S_\ell}\vert \tilde{v}^\ell_i -\tilde{v}^\tau_i\vert\\
    &\leq \frac{\tilde{\gamma}_2 (K+1)}{1+V(S_\ell)}\sum_{i\in S_\ell}(\vert \tilde{v}^\ell_i -v_i\vert+\vert \tilde{v}^\tau_i -v_i\vert),
\end{align*}
where the last inequality follows from $\tilde{v}^\ell_i \geq v_i$ because $\ell$ is an optimistic episode. Then
\begin{align*}
    \mathbb{E}\left[ \sum_{\tau\in \mathcal{E}^{An}(\ell)} \Delta R_{\ell,\tau} \ind{\mathcal{A}^c_{\ell-1}}  \right] & \lesssim  (K+1) \mathbb{E}\left[ \sum_{\tau\in \mathcal{E}^{An}(\ell)}\frac{\ind{\mathcal{A}^c_{\ell-1}}}{1+V(S_\ell)}\sum_{i\in S_\ell}(\vert \tilde{v}^\ell_i -v_i\vert+\vert \tilde{v}^\tau_i -v_i\vert) \right]  \\
    & \lesssim (K+1) \mathbb{E}\left[ \frac{\ind{\mathcal{A}^c_{\ell-1}}}{1+V(S_\ell)}\sum_{\tau\in \mathcal{E}^{An}(\ell)}\sum_{i\in S_\ell}\vert \tilde{v}^\ell_i -v_i\vert \right] \\
    &\quad+ (K+1) \mathbb{E}\left[ \frac{\ind{\mathcal{A}^c_{\tau-1}}}{1+V(S_\ell)}\sum_{\tau\in \mathcal{E}^{An}(\ell)}\sum_{i\in S_\ell}\vert \tilde{v}^\tau_i -v_i\vert \right]  \\
    &\quad+ (K+1) \mathbb{E}\left[ \frac{\ind{\mathcal{A}_{\tau-1}}}{1+V(S_\ell)}\sum_{\tau\in \mathcal{E}^{An}(\ell)}\sum_{i\in S_\ell}\vert \tilde{v}^\tau_i -v_i\vert \right].
\end{align*}
We can bound the first two terms in the same way as we obtain \eqref{eq:reg2eq3}, and we can bound the third term by \cite[Lemma 7]{DBLP:conf/colt/AgrawalAGZ17} to obtain 
\begin{align*}
    \mathbb{E}\left[ \sum_{\tau\in \mathcal{E}^{An}(\ell)} \Delta R_{\ell,\tau} \ind{\mathcal{A}^c_{\ell-1}}  \right] & \lesssim  (K+1) \mathbb{E}\left[\frac{\vert \mathcal{E}^{An}(\ell)\vert}{1+V(S_\ell)} \sum_{i\in S_\ell}\left(\sqrt{\frac{v_i \log (TK)}{T_i(\ell)}}+\frac{ \log (TK)}{T_i(\ell)}+\frac{1}{\ell^2}\right) \right].
\end{align*}
Apply the Cauchy–Schwarz inequality and we have 
\begin{align*}
    \reg_T^1 & \lesssim \mathbb{E}\left[\sum_{\ell=1}^L \frac{(K+1)}{\ell^2}\right] + (K+1) \mathbb{E}\left[ \sum_{\ell=1}^L\frac{\vert \mathcal{E}^{An}(\ell)\vert}{1+V(S_\ell)} \sum_{i\in S_\ell}\left(\sqrt{\frac{v_i \log TK}{T_i(\ell)}}+\frac{ \log TK}{T_i(\ell)}+\frac{1}{\ell^2}\right) \right]\\
    &\lesssim (K+1)\left[1+\left( \mathbb{E}\left[\sum_{\ell=1}^L\vert \mathcal{E}^{An}(\ell)\vert^2\right]\right)^{1/2}\left(\left( \mathbb{E}\left[\sum_{\ell=1}^L \delta^2(\ell)\right]\right)^{1/2}+\left( \mathbb{E}\left[\sum_{\ell=1}^L \Delta^2(\ell)\right]\right)^{1/2}+\sqrt{K}\right)\right].
\end{align*}
where 
\begin{align*}
    \delta(\ell)=\frac{1}{1+V(S_\ell)}\sum_{i\in S_\ell}\sqrt{\frac{v_i \log (TK)}{T_i(\ell)}},\qquad
    \Delta(\ell)=\frac{1}{1+V(S_\ell)}\sum_{i\in S_\ell}\frac{ \log (TK) }{T_i(\ell)}.
\end{align*}
The bound of $\mathcal{E}^{An}(\ell)$, $\delta(\ell)$ and $\Delta(\ell)$ follows the same argument as in \cite[Proof of Theorem 1 and Lemma 5]{DBLP:conf/colt/AgrawalAGZ17}, and we can conclude that 
\begin{equation*}
    \reg_T^1 = \tilde{\mathcal{O}}( \sqrt{NT}) .
\end{equation*}
\end{proof}

\end{document}